\newcommand{\rank}{\textrm{rank}}
\theoremstyle{plain}
\newtheorem{theorem}{Theorem}[section]
\theoremstyle{definition}
\theoremstyle{remark}
\icmltitlerunning{Geometry-Aware Unsupervised Domain Adaptation}
\begin{document}

\twocolumn[
\icmltitle{Geometry-Aware Unsupervised Domain Adaptation}



\icmlsetsymbol{equal}{*}

\begin{icmlauthorlist}
\icmlauthor{You-Wei Luo}{sysu}
\icmlauthor{Chuan-Xian Ren}{sysu}
\icmlauthor{Zi-Ying Chen}{sysu}
\end{icmlauthorlist}

\icmlaffiliation{sysu}{School of Mathematics, Sun Yat-Sen University}

\icmlcorrespondingauthor{C. X. Ren}{rchuanx@mail.sysu.edu.cn}

\icmlkeywords{Machine Learning, ICML}

\vskip 0.3in
]



\printAffiliationsAndNotice{}  

\begin{abstract}
    Unsupervised Domain Adaptation (UDA) aims to transfer the knowledge from the labeled source domain to the unlabeled target domain in the presence of dataset shift. Most existing methods cannot address the domain alignment and class discrimination well, which may distort the intrinsic data structure for downstream tasks (e.g., classification). To this end, we propose a novel geometry-aware model to learn the transferability and discriminability simultaneously via nuclear norm optimization. We introduce the domain coherence and class orthogonality for UDA from the perspective of subspace geometry. The domain coherence will ensure the model a larger capacity for learning separable representations, and class orthogonality will minimize the correlation between clusters to alleviate the misalignment. So, they are consistent and can benefit from each other. Besides, we provide a theoretical insight into the norm-based learning literature in UDA, which ensures the interpretability of our model. We show that the norms of domains and clusters are expected to be larger and smaller to enhance the transferability and discriminability, respectively. Extensive experimental results on standard UDA datasets demonstrate the effectiveness of our theory and model.
\end{abstract}

\section{Introduction}

\noindent With the development of deep learning, computer vision and pattern recognition tasks, such as image classification and object detection, have been rapidly advanced. However, applying a model trained on source domain directly to a new environment (target domain) often results in significant performance degradation. This phenomenon is mainly caused by dataset shift~\cite{shimodaira2000improving}, i.e., there is a change in data distributions of different domains. As shown in Figure \ref{fig:problem_illustration}(a), the source and target subspaces are different but correlated from a geometric perspective. Domain adaptation (DA) can map data with different distributions of source domain and target domain into a shared representation space so that the shift problem can be alleviated. However, in practical scenarios, the data in new environment (target domain) are usually unlabeled. Annotating the target domain is a time-consuming and laborious task. As a more general problem, Unsupervised DA (UDA) is proposed to transfer the domain-invariant knowledge to a target domain without labels.

\begin{figure}[t]
\vskip 0.2in
\begin{center}
\centerline{\includegraphics[width=0.95\linewidth,trim=10 2 10 3,clip]{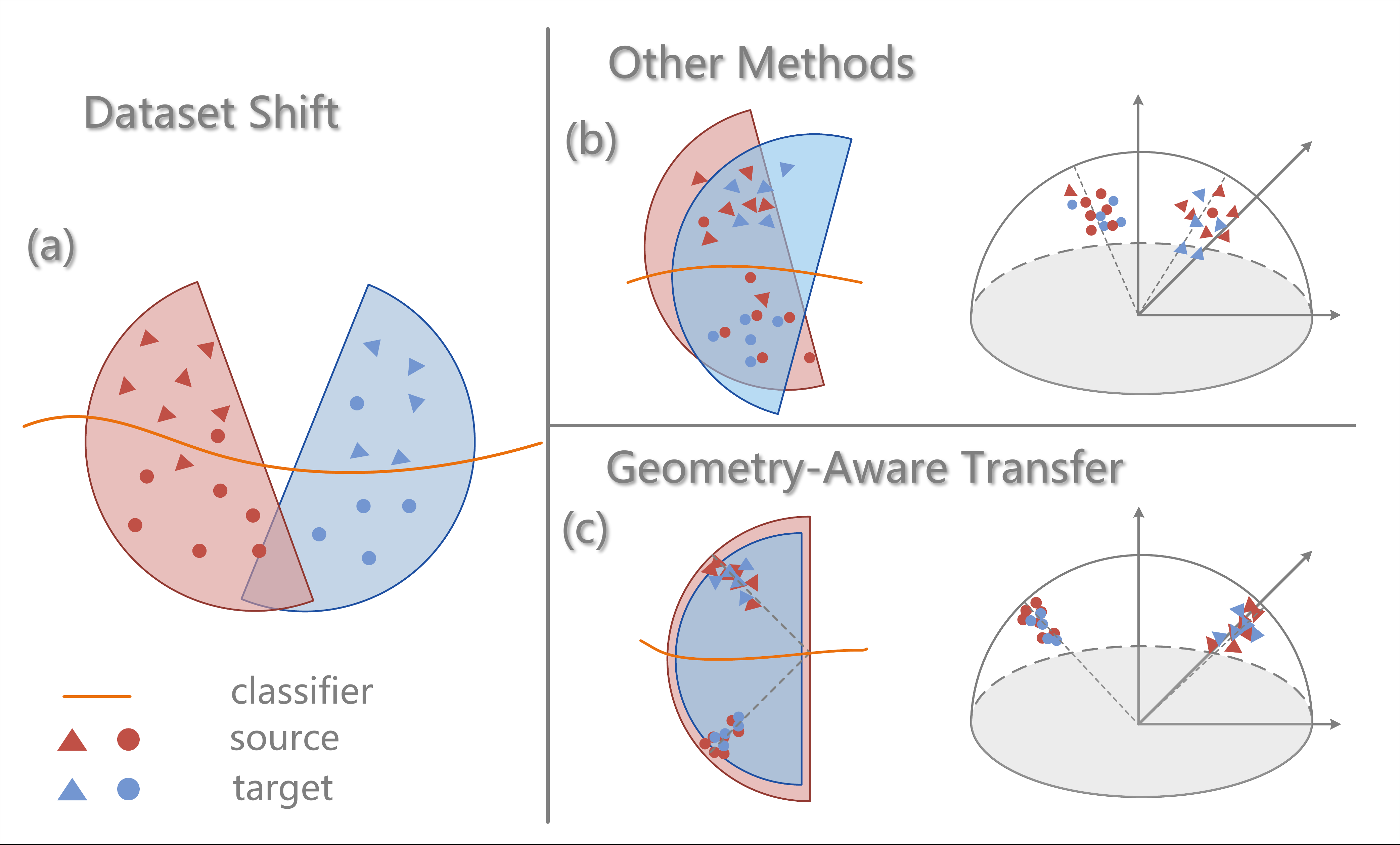}}
\caption{Problem illustration. \textbf{(a)}: The source domain and target domain are nearly linearly independent when dataset shift exists. \textbf{(b)}: Other methods of domain alignment do not make full use of geometric structure, so the clusters are potentially linearly dependent. \textbf{(c)}: The geometry-aware transfer learns a domain coherent subspace where the domains share a same basis, and a class orthogonal structure where the clusters are uncorrelated and maximum separated. These ensure both transferability and discriminability.}
\label{fig:problem_illustration}
\end{center}
\vskip -0.2in
\end{figure}

Existing UDA methods usually train the model to minimize the empirical risk on the source domain, and then use some approaches to reduce the discrepancy between source domain and target domain in an alternative manner. There are many approaches to reduce the cross-domain discrepancy such as Maximum mean discrepancy (MMD) \cite{borgwardt2006integrating,tzeng2014deep,long2015learning}, which directly measures the marginal discrepancy between domains; Joint MMD (JMMD)~\cite{long2017deep}, which models the joint distributions across domains as the tensor products in Hilbert spaces; Adversarial adaptation \cite{ganin2015unsupervised,tang2020discriminative,gu2020spherical} based on Generative adversarial network (GAN)~\cite{goodfellow2014generative}, which generates representations that are domain indistinguishable to achieve the domain invariance; Optimal Transport (OT)~\cite{courty2016optimal,zhang2019optimal,luo2021conditional}, which builds a transport problem across domains to reduce the Wasserstein distance; norm-based learning \cite{xu2019larger,cui2020towards}, which establishes a maximization problem with different norms to learn transferability. However, most of the UDA models are mainly designed for domain alignment, which cannot address class discrimination well. Besides, if the geometric structures of domains are learned insufficiently, the optimization procedure will be inevitably error-prone. It will distort the intrinsic data structures used for classification and degrade the model's performance in the target domain. As Figure \ref{fig:problem_illustration}(b), those methods are not aware of the geometric structures and may learn the correlated clusters, which induces the negative transfer. Also, \citet{chen2019transferability} empirically shown that some UDA methods tend to increase transferability at the cost of decreasing discriminability.

In this work, we propose a novel geometry-aware transfer model for enhancing transferability and discriminability. As high-dimensional data tend to have smaller intrinsic dimension \cite{qiu2015learning,lezama2018ole}, geometry-aware model tries to learn a compact representation space while maximum preserving the diversity of clusters and correlation of domains. As shown in Figure \ref{fig:problem_illustration}(c), the geometry-aware transfer learn the overlapped structures for different domains, where the linear dependence between clusters are minimized, i.e., orthogonality. Concretely, we propose the domain coherent and cluster orthogonal constraints for UDA problem, which provide an insight into geometric structures of domains and clusters via the rank of data matrix. To optimize the non-convex rank-based objective, we propose the surrogate nuclear norm-based model called \textbf{GE}ometry-aware \textbf{T}ransfer via nuclear norm optimization (GET). The contributions can be summarized as follows:

\begin{itemize}
\item We propose a geometry-aware model to capture intrinsic low-dimensional structures for downstream tasks, where the domains and clusters are modeled as the subspaces. It learns the cross-domain coherence and inter-class separability via the rank of subspaces, which ensures the transferability and discriminability.
\item We provide a theoretical insight into the norm-based learning literatures in UDA, which connects the discriminability/transferability enhancement with norm-based UDA models, and shows the possibility to learn both the abilities simultaneously.
\item Since the rank-based objective function is difficult to solve, we propose to optimize the rank-based model via surrogate nuclear norm, which ensures the interpretability and numerical stability of the proposed method. Extensive experiments and results demonstrate that GET can achieve new SOTA performance.
\end{itemize}

\section{Related Work}
\subsection{Unsupervised Domain Adaptation}
The purpose of UDA is to transfer knowledge from a labeled source domain to an unlabeled target domain. Existing UDA approaches measure the discrepancy between the source domain and target domain, then train a model to minimize this discrepancy. For example, \citet{tzeng2014deep} propose Deep Domain Confusion (DDC) by applying Maximum Mean Discrepancy (MMD) to the feature layer to reduce the domain discrepancy for the pre-trained AlexNet~\cite{krizhevsky2012imagenet}. Deep Adaptation Network (DAN)~\cite{long2015learning} extends the DDC by proposing a multi-layer and multiple kernel variant of MMD. Domain Adversarial Neural Network (DANN)~\cite{ganin2015unsupervised} trains a feature generator and a domain discriminator alternative to learn invariant representations. Adversarial-Learned Loss for Domain Adaptation (ALDA)~\cite{chen2020adversarial} constructs the confusion matrix with probability vectors of discriminator to adversarial model. \citet{na2021fixbi} propose the intermediate domains based on the mixup method, which can build a bridge between the source domain and target domain. Adaptive Feature Norm (AFN) \cite{xu2019larger} and Batch Nuclear-norm Maximization (BNM) \cite{cui2020towards} attempt to enhance the transferability by maximizing the matrix norm of batch data.

These methods mainly consider the domain alignment without integrating the class discrimination. Our method uses nuclear norm to align the subspaces of domains while minimizing the correlation between clusters.

\subsection{Class-Specific Learning}
Most of the UDA methods aim to learn domain invariant representations by aligning the domain globally. However, global domain alignment not only does not contribute to discriminability, but may also cause misclassification \cite{zhao2019learning}. Some recent works have considered class-specific adaptation for different application \cite{gong2016domain,ren2020learning,xu2021cross,luo2021conditional}. Since the target domain is unlabeled, these works rely on predicting pseudo-labeling~\cite{kang2019contrastive} or computing prototype representations of source and target classes~\cite{wang2020unsupervised}, and then the target domain samples are classified by the prototype of the target domain classes during the training process. Structure-preserving methods \cite{ren2019domain,xia2020structure} try to achieve the class-level transfer by matching the structure graphs across domains. However, as observed by \citet{chen2019transferability}, the discriminability may be decreased when the models only focus on enhancing transferability.

Compared with the norm-based methods \cite{xu2019larger,chen2019transferability,cui2020towards}, our model ensures the interpretability from a geometric perspective, and show the relation between the norm and discriminability/transferability mathematically. Besides, different from above works, we consider the class-specific adaptation from the perspective of geometry structure. It not only achieve the class-level transferability, but also enhance the cluster discriminability.

\section{Proposed Method}

For UDA, we are generally given source domain $\mathcal{D}_s=\{(x_i^s,y_i^s)\}_{i=1}^{n^s}$ and target domain $\mathcal{D}_t=\{x_i^t\}_{i=1}^{n^t}$ where $y$ is label with $k$ classes. $\mathbf{X}^s\in \mathbb{R}^{D\times n^s}$, $\mathbf{X}^t\in \mathbb{R}^{D\times n^t}$ are data matrices, $\mathbf{Y}^s\in \mathbb{R}^{k\times n^t}$ is one-hot label matrix, $\mathbf{X}=[\mathbf{X}^s,\mathbf{X}^t]\in\mathbb{R}^{D\times n}$ is the concatenation of $\mathbf{X}^s$ and $\mathbf{X}^t$. Let $\mathbf{X}^s_i\in\mathbb{R}^{D\times n^s_i}$ and $\mathbf{X}_i=[\mathbf{X}^s_i,\mathbf{X}^t_i]\in\mathbb{R}^{D\times n_i}$ be the data matrices of the $i$-th class and $i$-th source class, respectively. Note that the sample size satisfies that $n = n^s+n^t = n_1+n_2+\ldots+n_k$ and $n_i = n^s_i+n^t_i$. Generally, the superscript `$s/t$' means the domain, subscript `$i$' of capital (resp. lowercase) letter means the cluster (resp. sample). $\|\mathbf{X}\|_*$ is the nuclear norm of matrix $\mathbf{X}$, \textit{i.e.}, the sum of the singular values of $\mathbf{X}$. The goal of UDA is to find a domain-invariant representation space for both domains.

\subsection{Motivation}

We present the essential motivation based on the rank of matrix, which is closely related to the geometry structure of data. The smaller the rank, the higher the correlation between the column vectors (\textit{i.e.}, data) of the matrix. For the concatenation of domains $[\mathbf{X}^s,\mathbf{X}^t]$, a low-rank structure, where the source and target domains are correlated and linear dependent, is highly expected for enhancing transferability. In contrast, for the concatenation of clusters $[\mathbf{X}_1,\mathbf{X}_2,\ldots,\mathbf{X}_k]$, a full-rank structure is desired for learning independent cluster subspaces and enhancing discriminability. Our goal is to learn a compact representation space with enhanced transferability and discriminability, which ensure the effectiveness and interpretability for knowledge transfer model.

We first review some preliminary results on rank, which give an insight into the learning of geometry structure. It is well-known that for matrices $\mathbf{A}$, $\mathbf{B}$, there are
\begin{small}
\begin{equation*}
\rank ([\mathbf{A},\mathbf{B}])\leq \rank (\mathbf{A}) + \rank (\mathbf{B}),
\end{equation*}
\end{small}\noindent
with equality if and only if $\mathbf{A}$ and $\mathbf{B}$ are disjoint. It is clear that the inequity also holds for multiple matrices, \textit{i.e.},
\begin{small}
\begin{equation}\label{eq:rank-upper-bound}
\rank ([\mathbf{A}_1,\mathbf{A}_2,...,\mathbf{A}_k])\leq \sum_{i=1}^k \rank (\mathbf{A}_i).
\end{equation}
\end{small}\noindent
Besides, there is a lower bound of the rank of concatenation:
\begin{small}
\begin{equation}\label{eq:rank-lower-bound}
   \max\{\rank (\mathbf{A}) , \rank (\mathbf{B})\} \leq  \rank ([\mathbf{A},\mathbf{B}]),
\end{equation}
\end{small}\noindent
with equality if and only if the column space of one matrix is a subspace of the column space of another matrix.

Above analysis provides an motivation for learning transferability and discriminability via rank. Specifically, the concatenation of clusters $[\mathbf{X}_1,\mathbf{X}_2,\ldots,\mathbf{X}_k]$ is expected to reach the equality in upper bound Eq. \eqref{eq:rank-upper-bound}, where the bases of different classes are disjoint; the concatenation of domains $[\mathbf{X}^s,\mathbf{X}^t]$ is expected to reach the equality in lower bound Eq. \eqref{eq:rank-lower-bound}, where the subspaces of domains are overlapped and one domain can be represented by the basis of another.

Note that rank-based optimization is an NP-hard problem, then we consider the surrogate nuclear norm. Since nuclear norm is the convex envelop of rank over the unit ball of matrices, the upper-bound holds similarly as \cite{qiu2015learning}
\begin{small}
\begin{equation}\label{eq:nuclear-upper-bound}
\|[\mathbf{A},\mathbf{B}]\|_*\leq \|\mathbf{A}\|_*+\|\mathbf{B}\|_*,
\end{equation}
\end{small}\noindent
where the equality can be achieved as follows.
\begin{theorem}\label{thm:class_orthogonal}\cite{qiu2015learning}
Let $\mathbf{A}$ and $\mathbf{B}$ be matrices of the same row dimensions, we have
\begin{small}
\begin{equation*}
\|[\mathbf{A},\mathbf{B}]\|_*=\|\mathbf{A}\|_*+\|\mathbf{B}\|_*,
\end{equation*}
\end{small}
when the column spaces of $\mathbf{A}$ and $\mathbf{B}$ are orthogonal.
\end{theorem}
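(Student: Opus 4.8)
The plan is to prove the identity $\|[\mathbf{A},\mathbf{B}]\|_* = \|\mathbf{A}\|_* + \|\mathbf{B}\|_*$ under the hypothesis that the column spaces of $\mathbf{A}$ and $\mathbf{B}$ are orthogonal, by working directly with the singular value decompositions of the two matrices. Write $\mathbf{A} = \mathbf{U}_A \boldsymbol{\Sigma}_A \mathbf{V}_A^\top$ and $\mathbf{B} = \mathbf{U}_B \boldsymbol{\Sigma}_B \mathbf{V}_B^\top$ as (thin) SVDs, so that the columns of $\mathbf{U}_A$ span the column space of $\mathbf{A}$ and the columns of $\mathbf{U}_B$ span the column space of $\mathbf{B}$. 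The orthogonality hypothesis says exactly that $\mathbf{U}_A^\top \mathbf{U}_B = \mathbf{0}$, i.e. the left singular vectors of the two blocks are mutually orthonormal.

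First I would exhibit an explicit SVD (or at least an orthogonal-factor factorization) of the concatenation $[\mathbf{A},\mathbf{B}]$. Concatenating the two SVDs gives
\begin{small}
\begin{equation*}
[\mathbf{A},\mathbf{B}] = [\mathbf{U}_A, \mathbf{U}_B]
\begin{bmatrix} \boldsymbol{\Sigma}_A & \mathbf{0} \\ \mathbf{0} & \boldsymbol{\Sigma}_B \end{bmatrix}
\begin{bmatrix} \mathbf{V}_A & \mathbf{0} \\ \mathbf{0} & \mathbf{V}_B \end{bmatrix}^\top .
\end{equation*}
\end{small}\noindent
By the orthogonality hypothesis, $[\mathbf{U}_A,\mathbf{U}_B]$ has orthonormal columns; the middle matrix is diagonal with nonnegative entries (the singular values of $\mathbf{A}$ together with those of $\mathbf{B}$); and the right factor manifestly has orthonormal columns since $\mathbf{V}_A$ and $\mathbf{V}_B$ do and the block structure keeps the two groups of columns orthogonal. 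Hence this is a valid SVD of $[\mathbf{A},\mathbf{B}]$, so its singular values are precisely the multiset union of those of $\mathbf{A}$ and those of $\mathbf{B}$. Summing them yields $\|[\mathbf{A},\mathbf{B}]\|_* = \sum_i \sigma_i(\mathbf{A}) + \sum_j \sigma_j(\mathbf{B}) = \|\mathbf{A}\|_* + \|\mathbf{B}\|_*$.

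The only genuinely delicate point is making sure the factorization above really qualifies as an SVD — in particular that $[\mathbf{U}_A,\mathbf{U}_B]$ has orthonormal columns, which is where (and the only place where) the orthogonality hypothesis is used; without it the cross terms $\mathbf{U}_A^\top\mathbf{U}_B$ are nonzero and the argument collapses to the inequality \eqref{eq:nuclear-upper-bound}. A clean alternative that sidesteps any fuss about thin-versus-full SVDs and zero singular values is the variational (dual norm) characterization $\|\mathbf{M}\|_* = \max_{\|\mathbf{Z}\|_{\mathrm{op}}\le 1} \langle \mathbf{Z}, \mathbf{M}\rangle$: take optimal duals $\mathbf{Z}_A, \mathbf{Z}_B$ for $\mathbf{A}$ and $\mathbf{B}$, restrict them to act on the respective column spaces (possible since those spaces are orthogonal, so the combined operator still has operator norm at most one), and assemble $\mathbf{Z} = [\mathbf{Z}_A,\mathbf{Z}_B]$ as a feasible dual for $[\mathbf{A},\mathbf{B}]$ achieving $\langle \mathbf{Z},[\mathbf{A},\mathbf{B}]\rangle = \|\mathbf{A}\|_*+\|\mathbf{B}\|_*$; combined with \eqref{eq:nuclear-upper-bound} this forces equality. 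I would present the SVD argument as the main proof since it is the most transparent, and expect no serious obstacle beyond bookkeeping with the block matrices.
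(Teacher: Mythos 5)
Your proof is correct. Note that the paper itself gives no proof of this statement --- it is quoted from \citet{qiu2015learning} as a known result --- so there is nothing internal to compare against; your SVD argument is the standard one and is complete: writing $\mathbf{A}=\mathbf{U}_A\boldsymbol{\Sigma}_A\mathbf{V}_A^\top$, $\mathbf{B}=\mathbf{U}_B\boldsymbol{\Sigma}_B\mathbf{V}_B^\top$ as thin SVDs, the orthogonality of the column spaces gives $\mathbf{U}_A^\top\mathbf{U}_B=\mathbf{0}$, so $[\mathbf{U}_A,\mathbf{U}_B]$ has orthonormal columns, the block factorization you exhibit is a genuine (compact) SVD of $[\mathbf{A},\mathbf{B}]$, and its nuclear norm is the trace of the middle factor, namely $\|\mathbf{A}\|_*+\|\mathbf{B}\|_*$. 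Your dual-norm alternative (taking $\mathbf{Z}=[\mathbf{U}_A\mathbf{V}_A^\top,\mathbf{U}_B\mathbf{V}_B^\top]$, whose operator norm is at most $1$ precisely because $\mathbf{U}_A\mathbf{U}_A^\top+\mathbf{U}_B\mathbf{U}_B^\top$ is an orthogonal projection, and combining with the subadditivity bound \eqref{eq:nuclear-upper-bound}) is also valid and meshes nicely with the convex-duality viewpoint the paper leans on elsewhere, but either route suffices.
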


Orthogonality means the maximum separation, that is all principal angles between subspaces are equal to $\frac{\pi}{2}$. So nuclear norm ensures a strong property than rank where the bases are only linearly independent.

To learn a coherent domain geometry, the lower bound for nuclear norm and the condition of equality, which are yet unexplored, are indeed necessary. In the next, we will propose the rank/norm-based constraints and theoretically connect the norm-based learning with UDA.

\begin{figure}[t]
    \begin{center}
    \includegraphics[width=0.95\linewidth,trim=10 2 10 3,clip]{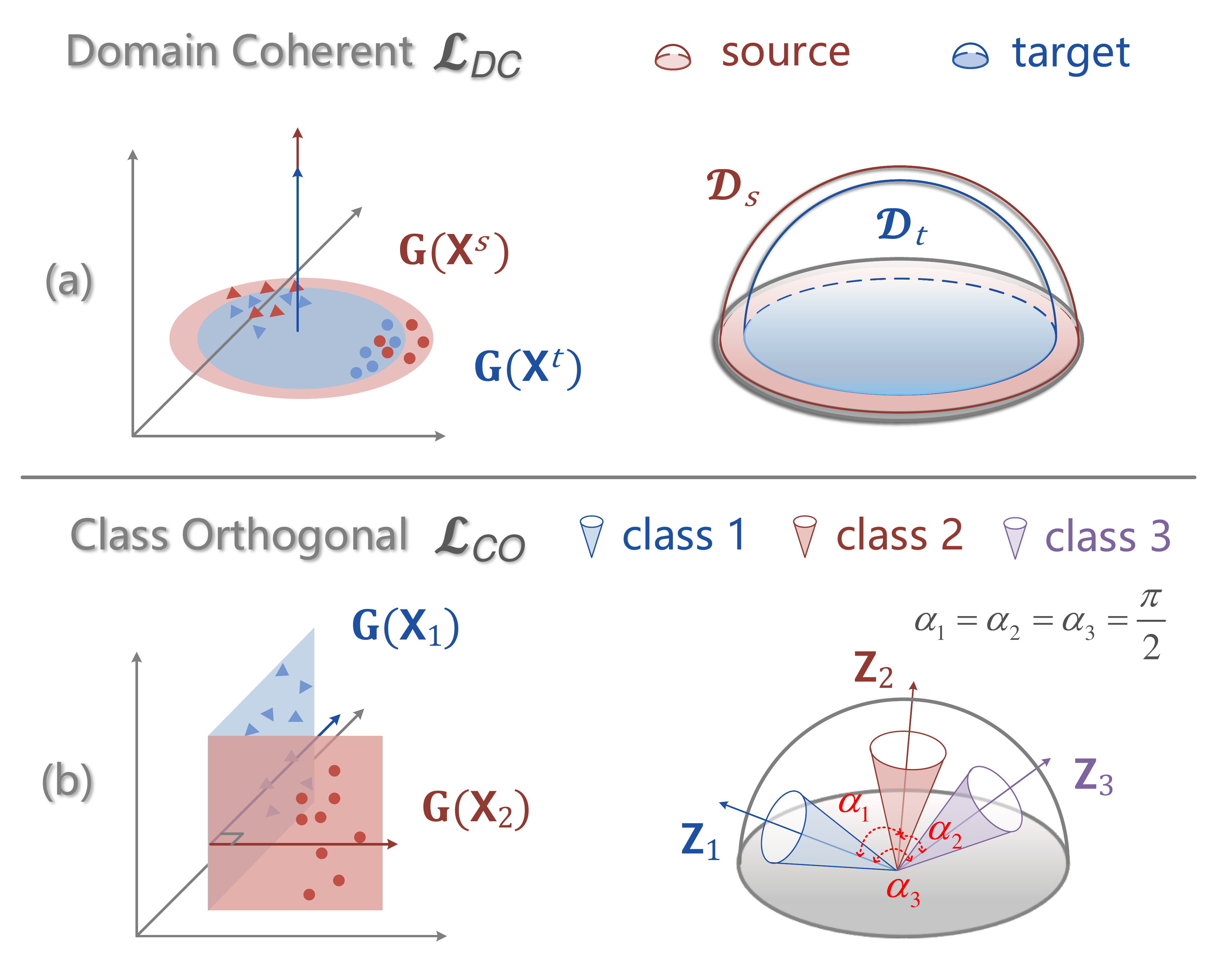} 
    \caption{Illustration of GET. \textbf{(a)}: The left figure depicts that domains are overlapped to maximize the coherence; the right figure shows the dimensions and coherence of subspaces are maximized when $\mathcal{L}_{DC}$ is optimized. \textbf{(b)}: The orthogonality between different classes enhances the discriminability. Under GET, the subspaces of different classes are orthogonal which maximizes the angles between hyperplanes. }
    \label{fig:method_illustration}
    \end{center}
    \vskip -0.2in
\end{figure}

\subsection{Geometry-Aware Transfer}
In this section, we first introduce the domain coherence and class orthogonality for building GET, and connect them with the transferability and discriminability, respectively. Theoretical results are derived to provide a foundation for norm-based learning in UDA. Then we propose the geometry-aware constraint which shows the possibility of enhancing transferability and discriminability simultaneously.

\subsubsection{Domain Coherent Constraint}
We first consider the transferability. Let $\mathbf{G}(\cdot):\mathbb{R}^{D}\rightarrow \mathbb{R}^{d}$ be feature projection. To learn representations with the desired geometry property, we first map the raw input $\mathbf{x}$ into the $d$-dimensional subspace as $\mathbf{z}=\mathbf{G}(\mathbf{x})$. Note that the notations for $\mathbf{Z}$ are similar to $\mathbf{X}$. As Figure~\ref{fig:method_illustration}(a), the domain coherence aims to learn a subspace that domains are overlapped. Thus, we define the \textit{domain coherent} constraint in rank form as
\begin{small}
\begin{equation*}\label{eq:domain_coherent_rank}
    \mathop{\arg \max}_{\textbf{G}} ~ \mathcal{J}_{DC} = \rank (\mathbf{Z}^s) + \rank (\mathbf{Z}^t) - \rank (\mathbf{Z}).
\end{equation*}
\end{small}\noindent
It is clear that $\mathcal{J}_{DC}\leq \min\{\rank (\mathbf{Z}^s) , \rank (\mathbf{Z}^t)\}$ from Eq. \eqref{eq:rank-lower-bound}. So the maximum of above objective is reached when the subspaces of $\mathbf{Z}^s$ and $\mathbf{Z}^t$ are overlapped, \textit{i.e.}, the representations of domains are maximum correlated.

However, as discussed before, the optimization of objective $\mathcal{J}_{DC}$ is non-convex and NP-hard. We reformulate the \textit{domain coherent} constraint in nuclear norm form as
\begin{small}
\begin{equation}\label{eq:domain_coherent_nuclear_global}
    \mathop{\arg \max}_{\textbf{G}} ~ \mathcal{L}_{DC} = \| \mathbf{Z}^s \|_* + \| \mathbf{Z}^t \|_* - \| \mathbf{Z} \|_*.
\end{equation}
\end{small}\noindent
We are interested in: 1) the upper bound of $\mathcal{L}_{DC}$, which is equivalent to the lower bound inequality Eq. \eqref{eq:rank-lower-bound} in nuclear norm form; 2) the condition for achieving the supremum of $\mathcal{L}_{DC}$, \textit{i.e.}, the condition for equality. These problems are essential for the interpretability and optimization of $\mathcal{L}_{DC}$.

For simplicity, we denote $\mathbf{A}\in \mathbb{R}^{d\times n}$, $\mathbf{B}\in \mathbb{R}^{d\times m}$ where row and column can be taken as the feature dimension and sample-size. As we consider the model in low-dimensional projective space, it is reasonable to assume $d\leq \min\{n,m\}$. We next show the results for above problem.
\begin{theorem}[Transferability]\label{thm:domain_coherent}
Assuming that $\| \mathbf{A} \|_\sigma \leq \alpha$, $\| \mathbf{B} \|_\sigma \leq \alpha$, where $\| \mathbf{A} \|_\sigma$ is the spectral norm, then \\
\textbf{(\romannumeral1)} $\|\mathbf{A}\|_*+\|\mathbf{B}\|_* - \|[\mathbf{A},\mathbf{B}]\|_* \leq (2-\sqrt{2})\alpha d$; \\
\textbf{(\romannumeral2)} the equality in (\romannumeral1) holds when $\mathbf{A}$, $\mathbf{B}$ have the same column spaces and $\|\mathbf{A}\|_*=\|\mathbf{B}\|_*=\alpha d$.
\end{theorem}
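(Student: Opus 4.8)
The plan is to push everything through the matrix square root. First I would use the identity $\|\mathbf{M}\|_* = \operatorname{tr}\bigl((\mathbf{M}\mathbf{M}^{T})^{1/2}\bigr)$ (the eigenvalues of $\mathbf{M}\mathbf{M}^{T}$ are the squared singular values of $\mathbf{M}$), together with the block computation $[\mathbf{A},\mathbf{B}][\mathbf{A},\mathbf{B}]^{T} = \mathbf{A}\mathbf{A}^{T}+\mathbf{B}\mathbf{B}^{T}$. Writing $P := \mathbf{A}\mathbf{A}^{T}\succeq 0$ and $Q := \mathbf{B}\mathbf{B}^{T}\succeq 0$, both $d\times d$, this recasts the quantity of interest as $\operatorname{tr}(P^{1/2})+\operatorname{tr}(Q^{1/2})-\operatorname{tr}\bigl((P+Q)^{1/2}\bigr)$, with $\|P\|_\sigma,\|Q\|_\sigma\le\alpha^{2}$. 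Since $P$ has size $d$, all of its (at most) $d$ eigenvalues are $\le\alpha^{2}$, so $\|\mathbf{A}\|_* = \operatorname{tr}(P^{1/2})\le\alpha d$, and likewise $\|\mathbf{B}\|_*\le\alpha d$; this is the elementary half of the bound.

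For the term $\|[\mathbf{A},\mathbf{B}]\|_* = \operatorname{tr}\bigl((P+Q)^{1/2}\bigr)$ I would invoke concavity of the map $M\mapsto\operatorname{tr}(M^{1/2})$ on the positive semidefinite cone (equivalently, operator concavity of $t\mapsto t^{1/2}$). Evaluating concavity at the midpoint of $P$ and $Q$ and using $\bigl(\tfrac12(P+Q)\bigr)^{1/2}=\tfrac{1}{\sqrt2}(P+Q)^{1/2}$ gives $\operatorname{tr}\bigl((P+Q)^{1/2}\bigr)\ge\tfrac{1}{\sqrt2}\bigl(\operatorname{tr}(P^{1/2})+\operatorname{tr}(Q^{1/2})\bigr)$, i.e. $\|[\mathbf{A},\mathbf{B}]\|_*\ge\tfrac{1}{\sqrt2}\bigl(\|\mathbf{A}\|_*+\|\mathbf{B}\|_*\bigr)$. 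Substituting this into $\|\mathbf{A}\|_*+\|\mathbf{B}\|_*-\|[\mathbf{A},\mathbf{B}]\|_*$ leaves $\bigl(1-\tfrac{1}{\sqrt2}\bigr)\bigl(\|\mathbf{A}\|_*+\|\mathbf{B}\|_*\bigr)=\tfrac{2-\sqrt2}{2}\bigl(\|\mathbf{A}\|_*+\|\mathbf{B}\|_*\bigr)$, and then $\|\mathbf{A}\|_*+\|\mathbf{B}\|_*\le2\alpha d$ yields exactly $(2-\sqrt2)\alpha d$, proving (i).

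For (ii) I would simply verify that the stated hypotheses saturate both inequalities at once. If $\|\mathbf{A}\|_*=\|\mathbf{B}\|_*=\alpha d$, then since $\mathbf{A}$ (resp.\ $\mathbf{B}$) has at most $d$ singular values, each bounded by $\alpha$, every singular value must equal $\alpha$; hence $P=Q=\alpha^{2}I_d$, the column spaces coincide (each equals $\mathbb{R}^{d}$), and $P+Q=2\alpha^{2}I_d$, so $\|[\mathbf{A},\mathbf{B}]\|_* = \operatorname{tr}\bigl((2\alpha^{2}I_d)^{1/2}\bigr)=\sqrt2\,\alpha d$. Then $\|\mathbf{A}\|_*+\|\mathbf{B}\|_*-\|[\mathbf{A},\mathbf{B}]\|_*=2\alpha d-\sqrt2\,\alpha d=(2-\sqrt2)\alpha d$, matching (i). The only genuinely non-elementary ingredient, and so the step I would state and cite most carefully, is the concavity of $M\mapsto\operatorname{tr}(M^{1/2})$ on the PSD cone (equivalently the operator inequality $(P+Q)^{1/2}\succeq\tfrac{1}{\sqrt2}\bigl(P^{1/2}+Q^{1/2}\bigr)$ followed by taking traces); everything else is block-matrix bookkeeping and the bound $\|\mathbf{M}\|_*\le\alpha\cdot\rank(\mathbf{M})\le\alpha d$.
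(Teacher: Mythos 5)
Your proof is correct, but for part (i) it takes a genuinely different route from the paper. The paper's key ingredient is the cited tensor-partition inequality $\bigl\|\,[\|\mathbf{A}\|_*,\|\mathbf{B}\|_*]\,\bigr\|_2\le\|[\mathbf{A},\mathbf{B}]\|_*$, i.e.\ $\|[\mathbf{A},\mathbf{B}]\|_*\ge\sqrt{x^2+y^2}$ with $x=\|\mathbf{A}\|_*$, $y=\|\mathbf{B}\|_*$; it then maximizes $f(x,y)=x+y-\sqrt{x^2+y^2}$ over the box $[0,\alpha d]^2$ by a Minkowski-based monotonicity argument, attaining the corner value $(2-\sqrt2)\alpha d$. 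You instead derive the weaker lower bound $\|[\mathbf{A},\mathbf{B}]\|_*\ge\tfrac{1}{\sqrt2}(x+y)$ from trace (or operator) concavity of the matrix square root applied to $P+Q=[\mathbf{A},\mathbf{B}][\mathbf{A},\mathbf{B}]^T$, and then use $x+y\le 2\alpha d$. Your bound is pointwise weaker than the paper's whenever $x\ne y$ (by the QM--AM inequality $\sqrt{x^2+y^2}\ge\tfrac{x+y}{\sqrt2}$), but the two coincide at the extremal point $x=y=\alpha d$, so you land on the same constant. What your route buys is self-containedness: you replace an external citation with the standard fact that $t\mapsto t^{1/2}$ is operator concave (Löwner--Heinz), and you avoid the boundary-maximization step entirely. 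What the paper's route buys is the sharper intermediate estimate $f(x,y)$, which degrades gracefully when the two norms are unequal. For part (ii) your verification is essentially the paper's computation via $\mathbf{C}\mathbf{C}^T=\mathbf{A}\mathbf{A}^T+\mathbf{B}\mathbf{B}^T$, with the additional (correct) observation that the hypotheses already force $\mathbf{A}\mathbf{A}^T=\mathbf{B}\mathbf{B}^T=\alpha^2 I_d$, so the ``same column spaces'' clause is automatic; since the theorem only asserts a sufficient condition for equality, direct verification is all that is required.
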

\begin{proof}
    \textbf{(\romannumeral1)} Denote $\mathbf{C}=[\mathbf{A},\mathbf{B}]$. Note that matrix is a second-order tensor, and $\{\mathbf{A},\mathbf{B}\}$ is a 2-regular partition of $\mathbf{C}$, then the following inequality holds \cite[Theorem 3.1]{li2016bounds}:
    \begin{small}
    \begin{equation*}\label{eq:tensor_partition_ineq}
       \left\| \big[\|\mathbf{A}\|_*,\|\mathbf{B}\|_*\big] \right\|_2 \leq
       \| \mathbf{C} \|_*.
    \end{equation*}
    \end{small}\noindent
    By applying it into $\|\mathbf{A}\|_*+\|\mathbf{B}\|_* - \|\mathbf{C}\|_*$, we have
    \begin{small}
    \begin{equation*}\label{eq:nuclear_bound}
        \|\mathbf{A}\|_*+\|\mathbf{B}\|_* - \|\mathbf{C}\|_*
        \leq
        \|\mathbf{A}\|_*+\|\mathbf{B}\|_* - \left\| [\|\mathbf{A}\|_*,\|\mathbf{B}\|_*] \right\|_2.
    \end{equation*}
    \end{small}\noindent
    Let $\|\mathbf{A}\|_*=x$, $\|\mathbf{B}\|_*=y$ and $f(x,y)=x+y-\sqrt{x^2+y^2}$, the above inequality is written as $\|\mathbf{A}\|_*+\|\mathbf{B}\|_* - \|\mathbf{C}\|_* \leq f(x,y)$. Note $x\in[0,\alpha d]$ also holds since $\| \mathbf{A} \|_\sigma \leq \alpha$ ($y$ is similar). Now we show that $f(x,y)$ is monotone \textit{w.r.t.} a partial order on $\mathbb{R}^2$. Let $\mathbf{z}=[x,y]^T$, $\Delta \mathbf{z}=[\Delta x,\Delta y]^T$, then $\forall \Delta x, \Delta y > 0$, we have
    \begin{small}
    \begin{eqnarray}
        &~ & f(x+\Delta x,y+ \Delta y )- f(x,y)  \nonumber \\
        &=& \| \Delta \mathbf{z} \|_1 - \left(\| \mathbf{z} + \Delta \mathbf{z} \|_2 - \| \mathbf{z} \|_2 \right)  \nonumber \\
        &\geq & \| \Delta \mathbf{z} \|_1 - \| \Delta \mathbf{z} \|_2  \label{eq:minkowski_ineq} \\
        &\geq & 0, \nonumber \label{eq:1_2_norm_ineq}
    \end{eqnarray}
    \end{small}\noindent
    where Eq.~\eqref{eq:minkowski_ineq} holds from Minkowski's inequality.

    As $\mathbf{z}\in [0,\alpha d]^2$ and $f(x,y)$ is monotone, the maximum is attained at boundary point, \textit{i.e.}, $f(\alpha d,\alpha d)=(2-\sqrt{2})\alpha d$. Finally, $\|\mathbf{A}\|_*+\|\mathbf{B}\|_* - \|\mathbf{C}\|_* \leq \sup f(x,y) \leq (2-\sqrt{2})\alpha d $.

    \textbf{(\romannumeral2)} As $\mathbf{A}$, $\mathbf{B}$ have the same column spaces, the Singular Value Decomposition (SVD) of $\mathbf{A}$ and $\mathbf{B}$ can be written as $\mathbf{A}=\mathbf{U}\mathbf{\Sigma}\mathbf{V}_1^T$ and $\mathbf{B}=\mathbf{U}\mathbf{\Lambda}\mathbf{V}_2^T$. Then the nuclear norm of $\mathbf{C}$ is $\sum_{i=1}^d \sqrt{\sigma_i^2+\lambda_i^2}$ since $\| \mathbf{C} \|_*=\text{tr}(\sqrt{\mathbf{C}\mathbf{C}^T})$ and
    \begin{small}
    \begin{equation*}
        \mathbf{C}\mathbf{C}^T=\mathbf{A}\mathbf{A}^T+\mathbf{B}\mathbf{B}^T
        = \mathbf{U} (\mathbf{\Sigma}^2 + \mathbf{\Lambda}^2) \mathbf{U}^T.
    \end{equation*}
    \end{small}\noindent
    Recall that $\|\mathbf{A}\|_*=\alpha d$ and $\| \mathbf{A} \|_\sigma \leq \alpha$, then $\forall i,~\sigma_i=\alpha$ ($\lambda_i$ is similar). Finally, we have $\|\mathbf{A}\|_*+\|\mathbf{B}\|_* - \|\mathbf{C}\|_*
    = \sum_{i=1}^d \sigma_i + \lambda_i - \sqrt{\sigma_i^2+\lambda_i^2}
    =  (2-\sqrt{2})\alpha d.$
\end{proof}
Theorem \ref{thm:domain_coherent}\textbf{(\textit{\romannumeral1)}} shows that the domain coherent objective $\mathcal{L}_{DC}$ is upper-bounded by $(2-\sqrt{2})\alpha d$ in a ball of finite radius $\alpha$. Since nuclear norm is the convex envelop of rank over the unit ball \cite{fazel2001rank}, $\alpha$ is usually set as 1 in application. This guarantees that the optimization problem $\mathcal{L}_{DC}$ is well-defined and have non-trivial solution.

Theorem \ref{thm:domain_coherent}\textbf{(\textit{\romannumeral2)}} ensures the interpretability of GET model for learning transferable representations, and provides an insight into the norm-based learning. Specifically, $\mathcal{L}_{DC}$ is maximized when the source and target domains share the same subspace basis. Besides, the norms of representations are required to be maximized as $\|\mathbf{Z}^s\|_*=\|\mathbf{Z}^t\|_*=\alpha d$. It implies that the larger norm is sufficient to learn two linearly dependent domain subspaces with overlapping subspace bases. This result provides a theoretical justification to recent results in learning transferability \cite{xu2019larger,cui2020towards} which empirically observe that the representations with larger norms are more preferable. In fact, the overlap of subspace bases ensures the geometric interpretability for $\mathcal{L}_{DC}$, which means the domains can be expressed by each other as the linear combinations of basis.

As suggested in recent results \cite{yan2017mind,zhao2019learning}, the global alignment may distort the intrinsic structure of clusters. To preserve the intrinsic structure while learning domain coherent representations, we further reformulate the objective $\mathcal{L}_{DC}$ as a class-level learning problem.
\begin{small}
\begin{equation}\label{eq:domain_coherent_nuclear}
    \mathop{\arg \max}_{\textbf{G}} ~ \mathcal{L}_{DC} =\sum_{i=1}^k \| \mathbf{Z}_i^s \|_* + \| \mathbf{Z}_i^t \|_* - \| \mathbf{Z}_i \|_*.
\end{equation}
\end{small}\noindent

The fact that two subspaces are overlapped means the principal angles between them are 0. Therefore, the optimization of domain coherence $\mathcal{L}_{DC}$ will project the clusters of different domains into the same subspaces as shown in Figure \ref{fig:method_illustration}(a), which enhances the class-level transferability.

\subsubsection{Class Orthogonal Constraint}
As mentioned above, the smaller the rank, the higher the relevance. To achieve a better recognition performance with good geometric interpretability, two things are necessary. On the one hand, the relevance of features within the same class should be high, that is, the rank is small. Thus, the ranks of clusters should be as small as possible to learn a compact representation space. On the other hand, features of different classes should be as unrelated as possible, which means the overall rank should be large to ensure the diversity between clusters. Thus, they are as dissimilar to each other as possible. Based on these points, we can propose the \textit{class orthogonal} constraint as

\begin{small}
\begin{equation*}
    \mathop{\arg \min}_{\textbf{G}} ~ \mathcal{J}_{CO} =   \sum_{i=1}^k \rank (\mathbf{Z}_i) - \rank (\mathbf{Z}).
\end{equation*}
\end{small}\noindent

The above formulation mainly have two drawbacks: 1) the optimization problem of rank-based objective; 2) the rank-based objective can only ensure the linear independence but not orthogonality. For these reasons, we similarly propose the \textit{class orthogonal} constraint in nuclear norm form as
\begin{small}
\begin{equation}
\mathop{\arg \min}_{\textbf{G}} ~ \mathcal{L}_{CO} =   \sum_{i=1}^k \|\mathbf{Z}_i\|_*- \|\mathbf{Z}\|_*.
\end{equation}
\end{small}\noindent
According to Theorem \ref{thm:class_orthogonal}, the infimum of $\mathcal{L}_{CO}$ is reached when $\mathbf{Z}_i$ are orthogonal as shown in Figure \ref{fig:method_illustration}(b). In this case, the subspaces of different clusters will try to be orthogonal to each other and the principal angles will be maximized.

\subsubsection{Geometry-Aware Constraint}
Now we focus on the possibility of enhancing transferability and discriminability simultaneously. We first define \textit{geometry-aware} constraint as
\begin{small}
\begin{equation}
\mathop{\arg \min}_{\textbf{G}} ~ \mathcal{L}_{GA} = \lambda_{CO}\mathcal{L}_{CO} - \lambda_{DC}\mathcal{L}_{DC},
\end{equation}
\end{small}\noindent
where $\lambda_{DC}$ and $\lambda_{CO}$ are parameters.

Note that the choice of $\lambda_{DC}$ and $\lambda_{CO}$ can be taken as the trade-off between transferability and discriminability. Specifically, $\mathcal{L}_{CO}$ tries to learn compact subspaces for clusters but $\mathcal{L}_{DC}$ tends to learn a ``loose'' space with larger norm. Denote $\lambda = \frac{\lambda_{DC}}{\lambda_{CO}}$. If $\lambda \rightarrow \infty$, the minimum of $\mathcal{L}_{GA}$ is dominated by the maximum of domain coherence $\mathcal{L}_{DC}$ which is $(2-\sqrt{2})\alpha k d$. In this case, all clusters share a same subspace and discriminability is lost. In contrast, if $\lambda \rightarrow 0$, the minimum of $\mathcal{L}_{GA}$ is dominated by $\mathcal{L}_{CO}$ which is 0. The model may learn a trivial solution such that $\mathbf{Z}=\mathbf{0}$ and $\mathcal{L}_{GA}\rightarrow 0$.

Hopefully, such a trade-off problem can be addressed by choosing parameters carefully. Then $\mathcal{L}_{GA}$ and $\mathcal{L}_{CO}$ will be the regularization for each other. We next show how to balance the parameters to achieve a favorable geometric structure via \textit{geometry-aware} constraint $\mathcal{L}_{GA}$.

\begin{theorem}[Trade-off]\label{thm:geometry_aware}
Denote $\tilde{\mathcal{L}}_{GA}= \frac{\mathcal{L}_{GA}}{\lambda_{CO}}$, $\lambda = \frac{\lambda_{DC}}{\lambda_{CO}}$. Assuming that $\| \mathbf{Z}^{s/t} \|_\sigma \leq \alpha$ and $\| \mathbf{Z} \|_\sigma \leq \sqrt{2}\alpha$.  \\
\textbf{(\romannumeral1)} If $\lambda \geq 1+ \sqrt{2}$, then
\begin{equation*}
\tilde{\mathcal{L}}_{GA} \geq \left[\left((\sqrt{2}-2)\lambda + \sqrt{2}\right)\sqrt{k}-\sqrt{2}\right] \alpha d ,
\end{equation*}
with equality when $\mathbf{Z}^s_i$, $\mathbf{Z}^t_i$ have the same column spaces and $\| \mathbf{Z}^{s/t}_i \|_* = \frac{1}{\sqrt{k}}\alpha d$ ($\mathbf{Z}_1, \mathbf{Z}_2, \ldots, \mathbf{Z}_k$ are not orthogonal); \\
\textbf{(\romannumeral2)} If $\lambda \leq 1+ \sqrt{2}$, then
\begin{equation*}
\tilde{\mathcal{L}}_{GA} \geq (\sqrt{2}-2)\alpha \lambda d ,
\end{equation*}
with equality when $\mathbf{Z}^s_i$, $\mathbf{Z}^t_i$ have the same column spaces and the column spaces of $\mathbf{Z}_1, \mathbf{Z}_2, \ldots, \mathbf{Z}_k$ are orthogonal.
\end{theorem}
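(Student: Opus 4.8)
The plan is to turn the statement into a two–variable optimization and solve it by a case analysis whose breakpoint is $\lambda = 1+\sqrt2$. Writing $a_i = \|\mathbf Z_i^s\|_*$, $b_i = \|\mathbf Z_i^t\|_*$, $c_i = \|\mathbf Z_i\|_*$ and expanding the definitions of $\mathcal L_{CO}$ and $\mathcal L_{DC}$, one gets the identity
\begin{equation*}
\tilde{\mathcal L}_{GA} = (1+\lambda)\sum_{i=1}^k c_i - \|\mathbf Z\|_* - \lambda\sum_{i=1}^k (a_i+b_i),
\end{equation*}
so the task is to lower-bound the right-hand side under $\|\mathbf Z^{s/t}\|_\sigma \le \alpha$ and $\|\mathbf Z\|_\sigma \le \sqrt2\alpha$. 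I would assemble four elementary facts: \textbf{(1)} the partition inequality already used in the proof of Theorem~\ref{thm:domain_coherent}, $c_i \ge \sqrt{a_i^2+b_i^2} \ge \tfrac1{\sqrt2}(a_i+b_i)$, hence $\sum_i c_i \ge \tfrac1{\sqrt2}\sum_i(a_i+b_i)$; \textbf{(2)} subadditivity, $\|\mathbf Z\|_* \le \sum_i c_i$, with equality when the column spaces of the $\mathbf Z_i$ are mutually orthogonal (iterating Theorem~\ref{thm:class_orthogonal}); \textbf{(3)} $\|\mathbf Z\|_* \le \rank(\mathbf Z)\,\|\mathbf Z\|_\sigma \le \sqrt2\,\alpha d$; and \textbf{(4)} $\sum_i a_i \le \sqrt k\,\alpha d$ and likewise $\sum_i b_i \le \sqrt k\,\alpha d$, which follows from $\sum_i\|\mathbf Z_i^s\|_F^2 = \|\mathbf Z^s\|_F^2 \le \rank(\mathbf Z^s)\alpha^2 \le d\alpha^2$ together with $\|\mathbf Z_i^s\|_* \le \sqrt d\,\|\mathbf Z_i^s\|_F$ and Cauchy--Schwarz over the $k$ clusters.

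With these in hand the objective depends only on $S := \sum_i c_i$ and $T := \sum_i(a_i+b_i)$, constrained by $T/\sqrt2 \le S \le T$ and $T \le 2\sqrt k\,\alpha d$, while $\|\mathbf Z\|_*$ may be replaced by its upper bound $\min\{S,\sqrt2\alpha d\}$. I would then split on the sign of $S-\sqrt2\alpha d$: if $S \le \sqrt2\alpha d$, fact~(2) gives $\tilde{\mathcal L}_{GA} \ge \lambda(S-T)$; otherwise fact~(3) gives $\tilde{\mathcal L}_{GA} \ge (1+\lambda)S - \sqrt2\alpha d - \lambda T$, minimized in $S$ at $S = \max\{\sqrt2\alpha d,T/\sqrt2\}$. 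In each branch the resulting bound is affine and monotone in $T$, so its minimum sits at an endpoint $T \in \{2\alpha d,\, 2\sqrt k\,\alpha d\}$; the coefficient of $T$ at the endpoint $T = 2\sqrt k\,\alpha d$ is a positive multiple of $(\sqrt2-2)\lambda + \sqrt2$, which is $\ge 0$ iff $\lambda \le 1+\sqrt2$. Hence for $\lambda \le 1+\sqrt2$ the minimizing endpoint is $T = 2\alpha d$ and all branches collapse to $(\sqrt2-2)\alpha\lambda d$, whereas for $\lambda \ge 1+\sqrt2$ it is $T = 2\sqrt k\,\alpha d$, giving $[((\sqrt2-2)\lambda+\sqrt2)\sqrt k - \sqrt2]\alpha d$. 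The equality clauses then come from identifying the configurations that saturate all inequalities used: case~(i) forces $\mathrm{col}(\mathbf Z_i^s) = \mathrm{col}(\mathbf Z_i^t)$ with flat spectra and $\|\mathbf Z_i^{s/t}\|_* = \tfrac1{\sqrt k}\alpha d$, with all the $\mathbf Z_i$ sharing one $d$-dimensional subspace so that (3) is tight; case~(ii) forces the same within-cluster matching together with mutually orthogonal cluster subspaces of total rank $d$, each at spectral level $\alpha$, so that (2) is tight.

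The main obstacle is fact~\textbf{(4)}, which is exactly what keeps the bound in case~(ii) from degrading to $(\sqrt2-2)\alpha\lambda k d$: applying Theorem~\ref{thm:domain_coherent} cluster by cluster only yields $\|\mathbf Z_i^s\|_*+\|\mathbf Z_i^t\|_*-\|\mathbf Z_i\|_* \le (2-\sqrt2)\alpha d$, and summing loses a factor $k$. The resolution is to exploit the \emph{global} constraint $\|\mathbf Z^s\|_\sigma \le \alpha$, which caps the total energy $\sum_i\|\mathbf Z_i^s\|_F^2$ by $\alpha^2 d$ and thus forces the clusters to share a fixed energy budget across the ambient $d$ dimensions; converting this into the scalar bound via $\|\cdot\|_* \le \sqrt d\,\|\cdot\|_F$ and Cauchy--Schwarz is the one non-mechanical step. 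After that, the remaining work is the careful but routine bookkeeping of the piecewise-affine minimization in $(S,T)$ and the check that no interior or mixed configuration beats the two extremal ones.
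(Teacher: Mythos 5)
Your proposal is correct and follows essentially the same route the theorem's constants force: the identity $\tilde{\mathcal L}_{GA}=(1+\lambda)\sum_i c_i-\|\mathbf Z\|_*-\lambda\sum_i(a_i+b_i)$, the reduction to the two aggregates $S$ and $T$ via the partition inequality $c_i\ge\sqrt{a_i^2+b_i^2}\ge\tfrac{1}{\sqrt2}(a_i+b_i)$, the cap $\|\mathbf Z\|_*\le\min\{S,\sqrt2\alpha d\}$, and the sign change of $(\sqrt2-2)\lambda+\sqrt2$ at $\lambda=1+\sqrt2$ reproduce both bounds and both equality configurations exactly. The only cosmetic difference is your fact (4): the bound $\sum_i\|\mathbf Z_i^{s}\|_*\le\sqrt k\,\alpha d$ can be obtained more directly by applying the same $k$-block partition inequality used in Theorem~\ref{thm:domain_coherent} to get $\bigl\|\bigl[\|\mathbf Z_1^s\|_*,\dots,\|\mathbf Z_k^s\|_*\bigr]\bigr\|_2\le\|\mathbf Z^s\|_*\le\alpha d$ and then Cauchy--Schwarz, which is equivalent to your Frobenius-norm route and yields the same equality condition $\|\mathbf Z_i^{s/t}\|_*=\alpha d/\sqrt k$.
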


Theorem \ref{thm:geometry_aware} splits $\tilde{\mathcal{L}}_{GA}$ into two cases. In case \textbf{(\romannumeral1)} with $\lambda \geq 1+ \sqrt{2}$, the class-wise domain coherence $\mathcal{L}_{DC}$ is over-learned where orthogonality property of clusters is lost. This is because $\| \mathbf{Z}^{s/t}_i \|_* = \frac{1}{\sqrt{k}}\alpha d\Rightarrow \mathcal{L}_{DC}>0$, then the orthogonality condition will never be reached. That is to say, the enhancement of transferability will lead to a decreasing discriminability when $\lambda \geq 1+ \sqrt{2}$. Case \textbf{(\romannumeral2)} suggests that the proper parameters should satisfy that $\lambda_{DC}/\lambda_{CO}\leq 1+ \sqrt{2}$. In this case, class orthogonality $\mathcal{L}_{CO}$ provides a regularization that prevents the class-wise domain coherence $\mathcal{L}_{DC}$ from learning overlarge subspaces for clusters. Note that $\lambda$ should be greater than 0, since $\mathcal{L}_{DC}$ also provides a regularization for $\mathcal{L}_{CO}$ where the trivial solution $\mathbf{0}$ will be avoided. Besides, Theorem \ref{thm:geometry_aware} \textbf{(\romannumeral2)} implies that it is possible to learn the transferability and discriminability simultaneously, which demonstrates that the domain coherence and class orthogonality can benefit from each other. We will verify this conclusion in the hyper-parameters experiment which shows that the parameters should be balanced (\textit{i.e.}, $\lambda\approx 1$).

In summary, the goal of geometry-aware constraint is to make the representations of the same class as compact as possible, and the subspaces of different classes orthogonal. Besides, the subspaces of different domains are enlarged as much as possible to learn the domain coherence, which is also compatible with the learning of diversity clusters, \textit{i.e.}, maximum norm with full-rank.

\subsubsection{Overall Model}
The overall GET model consists of two parts: classification objective and geometry-aware constraint. Denote the classifier as $\mathbf{C}(\cdot)$ and its probabilistic predictions by $\hat{\mathbf{Y}}=\mathbf{C}(\mathbf{Z})$, we will optimize the feature projection $\mathbf{G}$ and classifier $\mathbf{C}$, which is instantiated by neural networks, in an end-to-end manner.

A common way to train a basic model for classification task is empirical risk minimization. Here we apply this rule with entropy-based cost functions to both the domains. First, the cross-entropy objective is applied to the source domain with ground-truth label for supervised learning:
\begin{small}
\begin{equation*}
\mathop{\arg \min}_{\textbf{G}, ~\textbf{C}}~ \mathcal{L}^s_{E} = \sum_{j=1}^{n^s}\sum_{i=1}^k - \mathbf{Y}_{ij}^s \log \hat{\mathbf{Y}}_{ij}^s.
\end{equation*}
\end{small}\noindent

For the target domain without labels, the entropy is applied for unsupervised learning, which also can be considered as a regularization. It preserves the classification knowledge and minimizes the uncertainty of prediction.
\begin{small}
\begin{equation*}
 \mathop{\arg \min}_{\textbf{G}, ~\textbf{C}}~ \mathcal{L}^t_{E} = \sum_{j=1}^{n^t}\sum_{i=1}^k - \hat{\mathbf{Y}}_{ij}^t \log \hat{\mathbf{Y}}_{ij}^t.
\end{equation*}
\end{small}\noindent

By integrating the geometry-aware constraint, the overall objective of GET model can be written as:
\begin{small}
\begin{equation}\label{eq:overall_objective}
\mathop{\arg \min}_{\textbf{G}, ~\textbf{C}}~ \mathcal{L} = \mathcal{L}_{Cls}(\textbf{G},\textbf{C}) + \mathcal{L}_{GA}(\textbf{G}),
\end{equation}
\end{small}\noindent
where $\mathcal{L}_{Cls}=\mathcal{L}^s_{E} + \lambda_t \mathcal{L}^t_{E}$ is the classification objective.

\begin{algorithm}[t]
    \caption{GET for UDA}
    \label{alg:GETforUDA}
    \begin{algorithmic}[1] 
    \REQUIRE Source dataset $\mathcal{D}_s$, Target dataset $\mathcal{D}_t$, Warming up epochs $T_{warm}$, GET epochs $T_{adapt}$, Learning rate $\lambda$;\\
    \ENSURE Feature projection $\mathbf{G}(\cdot)$, Classifier $\mathbf{C}(\cdot)$;
    \STATE Initialize the network parameters $\Theta=\{\Theta_G,\Theta_C\}$; \\
    \% \textit{Warming Up Stage}
    \FOR {$iter=1,2,...,T_{warm}$}
    \STATE Forward propagate $\{x_i^s\}_{i=1}^{n^s}$ according to $\mathbf{Z}=\mathbf{G}(\mathbf{X})$ and $\hat{\mathbf{Y}}=\mathbf{C}(\mathbf{Z})$; \\
    \STATE Compute $\mathcal{L}_{warm}=\mathcal{L}^s_{E}+\mathcal{L}_{DC}+\mathcal{L}_{CO}$, where $\mathcal{L}_{DC}$ follows Eq.~\eqref{eq:domain_coherent_nuclear_global} and $\mathcal{L}_{CO}$ only uses source data;
    \STATE Update: $\Theta\leftarrow\Theta-\lambda\triangledown\mathcal{L}_{warm}(\Theta)$;
    \ENDFOR\\
    \% \textit{GET Learning Stage}
    \FOR {$iter=1,2,...,T_{adapt}$}
    \STATE Forward propagate $\{x_i^s\}_{i=1}^{n^s}$ and $\{x_i^t\}_{i=1}^{n^t}$ and compute the pseudo-labels $\bar{\mathbf{Y}}^t$;
    \STATE Compute the overall GET objective $\mathcal{L}$ as Eq.~\eqref{eq:overall_objective};
    \STATE Update: $\Theta\leftarrow\Theta-\lambda\triangledown\mathcal{L}(\Theta)$;
    \ENDFOR
    \end{algorithmic}
\end{algorithm}

\begin{table*}[t]
\centering
\caption{Accuracies (\%) on \textbf{Office-Home} (ResNet-50)}
\label{tab:compare_home}
\renewcommand{\tabcolsep}{0.25pc} 
\vskip 0.05in
\begin{center}
\begin{small}
\begin{tabular}{cccccccccccccc}
\toprule
Method & \multicolumn{1}{l}{Ar$\rightarrow$Cl} & \multicolumn{1}{l}{Ar$\rightarrow$Pr} & \multicolumn{1}{l}{Ar$\rightarrow$Rw} & \multicolumn{1}{l}{Cl$\rightarrow$Ar} & \multicolumn{1}{l}{Cl$\rightarrow$Pr} & \multicolumn{1}{l}{Cl$\rightarrow$Rw} & \multicolumn{1}{l}{Pr$\rightarrow$Ar} & \multicolumn{1}{l}{Pr$\rightarrow$Cl} & \multicolumn{1}{l}{Pr$\rightarrow$Rw} & \multicolumn{1}{l}{Rw$\rightarrow$Ar} & \multicolumn{1}{l}{Rw$\rightarrow$Cl} & \multicolumn{1}{l}{Rw$\rightarrow$Pr} & \multicolumn{1}{l}{Avg.} \\ \midrule
Source               & 34.9                      & 50.0                      & 58.0                      & 37.4                      & 41.9                      & 46.2                      & 38.5                      & 31.2                      & 60.4                      & 53.9                      & 41.2                      & 59.9                      & 46.1                     \\
DAN                  & 43.6                      & 57.0                      & 67.9                      & 45.8                      & 56.5                      & 60.4                      & 44.0                      & 43.6                      & 67.7                      & 63.1                      & 51.5                      & 74.3                      & 56.3                     \\
DANN                 & 45.6                      & 59.3                      & 70.1                      & 47.0                      & 58.5                      & 60.9                      & 46.1                      & 43.7                      & 68.5                      & 63.2                      & 51.8                      & 76.8                      & 57.6                     \\
SAFN                   & 52.0                      & 71.7                      & 76.3            & 64.2                     &69.9                    & 71.9                      & 63.7                      & 51.4                      & 77.1                    & 70.9                      & 57.1                     & 81.5                     & 67.3                     \\
ETD                   & 51.3                      & 71.9                      & \textbf{85.7}             & 57.6                      & 69.2                      & 73.7                      & 57.8                      & 51.2                      & 79.3                      & 70.2                      & 57.5                      & 82.1                      & 67.3                     \\
ALDA   & \textbf{53.7} & 70.1 & 76.4 & 60.2 & 72.6 &71.5 &56.8 & 51.9 &77.1 &70.2&56.3 &82.1 &66.6\\
DMP                   & 52.3                      & 73.0                      & 77.3                      & \textbf{64.3}             & 72.0                      & 71.8                      & 63.6                      & 52.7                      & 78.5                      & \textbf{72.0}                      & \textbf{57.7}                      & 81.6                      & 68.1                     \\
BNM & 52.3 & 73.9 & 80.0 & 63.3 & 72.9 & \textbf{74.9} & 61.7 & 49.5 & 79.7 & 70.5 & 53.6 & 82.2 & 67.9 \\
\midrule
GET                 & 52.1                      & \textbf{75.6}             & 77.4                      & 61.6                      & \textbf{74.6}             & 73.7             & \textbf{65.0}             & \textbf{53.1}             & \textbf{80.8}             & 69.5                      & 55.9                      & \textbf{83.6}             & \textbf{68.6}            \\
\bottomrule
\end{tabular}
\end{small}
\end{center}
\vskip -0.1in
\end{table*}

\subsubsection{Optimization}
Since the geometry-aware constraint $\mathcal{L}_{GA}$ requires the labels of the target data, we use the classifier to assign pseudo-labels $\bar{\mathbf{Y}}^t\in\mathbb{R}^{k\times n^t}$ for target domain. Specifically, $\bar{\mathbf{Y}}_{ij}^t = 1$ if $i=\mathop{\arg \max}_l \hat{\mathbf{Y}}_{lj}^t$. To ensure the accuracy of subsequent tests, we set a threshold $\tau$ to select pseudo-label. The pseudo-label will be used only when the prediction probability is greater than threshold $\tau$.

To reduce the uncertainty of pseudo-label, we design a multi-stage pipeline. As Algorithm \ref{alg:GETforUDA}, there are two components: warming up and geometry structure learning. During the first stage, the source data with ground truth labels and target data without labels are used. At this point, the source classification objective $\mathcal{L}^s_{E}$, class orthogonal objective $\mathcal{L}_{CO}$ and global domain coherence $\mathcal{L}_{DC}$ in Eq.~\eqref{eq:domain_coherent_nuclear_global} are optimized, where only source data are applied to $\mathcal{L}_{CO}$. During the geometry learning stage, the entire GET model is optimized according to Eq. \eqref{eq:overall_objective}, where the source data with ground truth labels and target data with pseudo-labels are used.

\section{Experiments}
\noindent\textbf{Office-Home}~\cite{venkateswara2017deep} contains object images to form four domains: Artistic (Ar), Clipart (Cl), Product (Pr) and Real-World (Rw). Each domain contains 65 object categories, and they amount to around 15,500 images. 

\noindent\textbf{ImageCLEF}~\cite{caputo2014imageclef} consists of three domains: Caltech-256 (C), ImageNet (I) and Pascal-VOC (P). There are 12 categories and each class contains 50 images. 

\noindent\textbf{Office31}~\cite{saenko2010adapting} contains 4,110 images of 31 categories collected from three various domains: Amazon (A), Web camera (W) and Digital SLR camera (D). 
\subsection{Set Up}

We employ ResNet-50~\cite{he2016deep} as the backbone network. The network weights are optimized by ADAM with 0.01 weight decay. The learning rate is chosen from $[10^{-4}, 10^{-3}]$. On ImageCLEF, Office31 and Office-Home, we respectively train GET for 200, 300 and 350 epochs, where $T_{warm}=20$. The hyper-parameters are selected from $\{1,5,10\}\times10^{-4}$. For each transfer task, we report the average classification accuracy over ten random repeats.

\subsection{Comparison}

\begin{table}[t]
\caption{Accuracies (\%) on \textbf{ImageCLEF (top)} and \textbf{Office31 (bottom)} (ResNet-50).}
\label{tab:compare_clef&31}
\renewcommand{\tabcolsep}{0.4pc} 
\begin{center}
\begin{small}
\begin{tabular}{cccccccc}
\toprule
Method & I$\rightarrow$P           & P$\rightarrow$I           & I$\rightarrow$C           & \multicolumn{1}{c}{C$\rightarrow$I}           & \multicolumn{1}{c}{C$\rightarrow$P}           & \multicolumn{1}{c}{P$\rightarrow$C}  & Avg.          \\
\midrule
Source                 & 74.8      & 83.9      & 91.5      & 78.0                          & 65.5                          & 91.2                 & 80.7          \\
DAN                   & 74.5      & 82.2      & 92.8      & 86.3                          & 69.2                         & 89.8                 & 82.5          \\
DANN                  & 75.0      & 86.0      & 96.2      & 87.0                          & 74.3                          & 91.5                 & 85.0          \\
SAFN  & 79.3 &93.3 &96.3&91.7&77.6&95.3&88.9\\
ETD                   & 81.0          & 91.7          & 97.9 & \multicolumn{1}{c}{93.3}          & \multicolumn{1}{c}{79.5}          & \multicolumn{1}{c}{95.0} & 89.7          \\
DMP                  & 80.7      & 92.5      & 97.2      & 90.5                          & 77.7                          & 96.2                 & 89.1          \\
DWL        & \textbf{82.3}     & \textbf{94.8}      &\textbf{98.1}      & 92.8                          & 77.9                          & \textbf{97.2}        & 90.5          \\
\midrule 
GET                   & 82.2 & 94.1 & 97.3          & \textbf{95.6} & \textbf{82.3} & 96.4 & \textbf{91.3} \\
\bottomrule
\end{tabular}
\\[4pt]
\renewcommand{\tabcolsep}{0.245pc} 
\begin{tabular}{cccccccc}
\toprule
Method & A$\rightarrow$W           & A$\rightarrow$D           & W$\rightarrow$A           & W$\rightarrow$D          & D$\rightarrow$A           & D$\rightarrow$W          & Avg.          \\
\midrule
Source & 68.4 & 68.9 & 60.7 & 99.3 & 62.5 & 96.7 & 76.1 \\
DAN                & 80.5          & 78.6          & 62.8          & 99.6         & 63.6          & 97.1         & 80.4          \\
DANN              & 82.0          & 79.7          & 67.4          & 96.9         & 68.2          & 96.9         & 82.2          \\
SAFN & 90.3 & 92.1 &  71.2 &  100.0 & 73.4 & 98.7 &  87.6 \\
ETD                & 92.1          & 88.0          & 67.8          & 100.0          & 71.0          & \textbf{100.0} & 86.2          \\
DMP              & 93.0          & 91.0          & 70.2          & 100.0          & 71.4          & 99.0         & 87.4          \\
BNM & 91.5 & 90.3 & 71.6 & 100.0 & 70.9 & 98.5 & 87.1 \\
DWL               & 89.2          & 91.2 & 69.8         & 100.0          & 73.1         & 99.2         & 87.1          \\
\midrule 
GET               & \textbf{93.9} & \textbf{93.1}          & \textbf{73.0} & \textbf{100.0} & \textbf{78.0} & 98.7         & \textbf{89.5} \\
\bottomrule
\end{tabular}
\end{small}
\end{center}
\vskip -0.1in
\end{table}

\begin{figure*}[t]
    \centering
    \subfigure[ImageCLEF I$\rightarrow$P \label{fig:3D_clefIP}]{\includegraphics[width=0.245\linewidth,height=77pt,trim=120 60 100 70,clip]{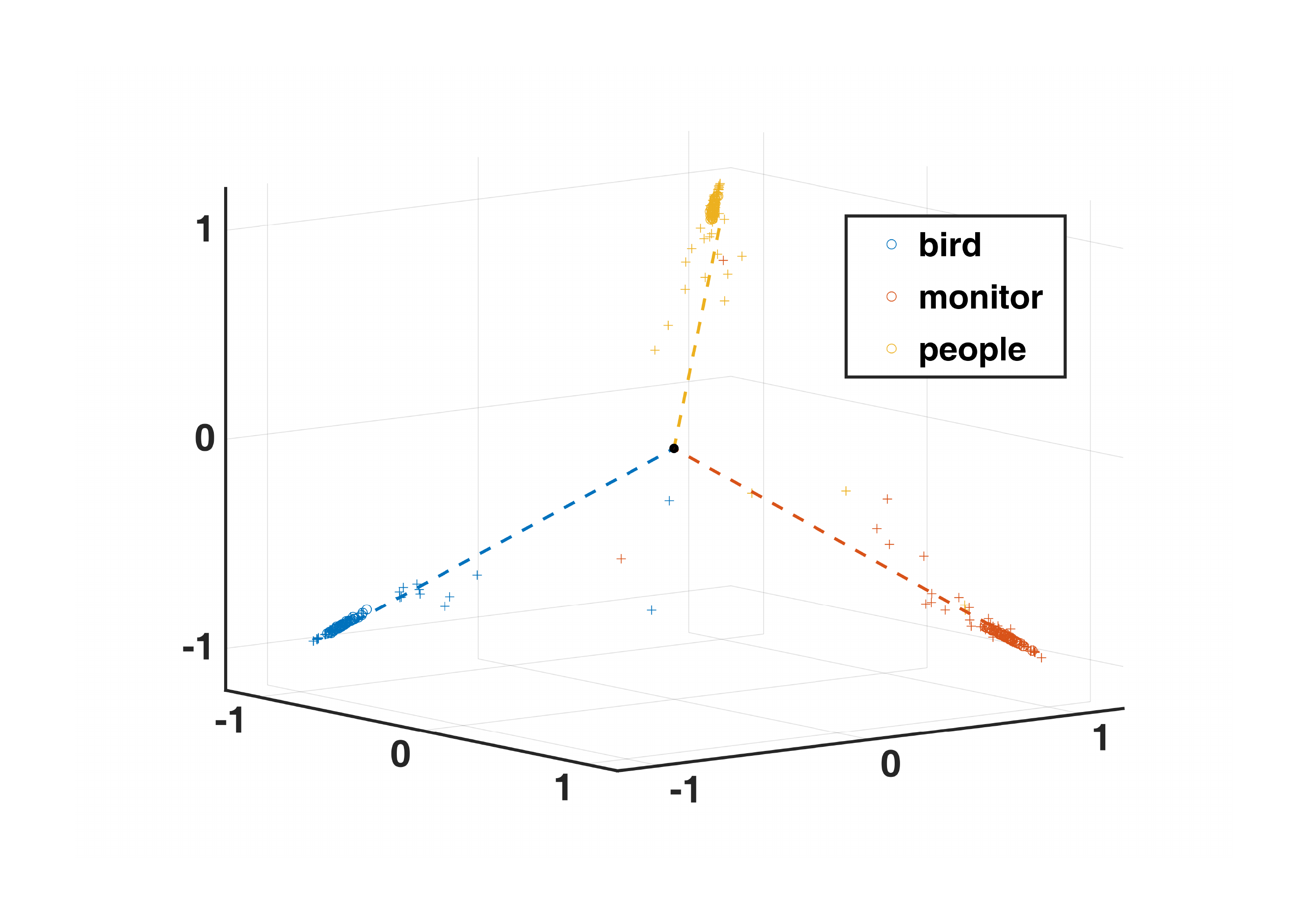}}
    \hfill
    \subfigure[Office31 A$\rightarrow$W \label{fig:3D_31AW}]{\includegraphics[width=0.245\linewidth,height=77pt,trim=120 60 100 70,clip]{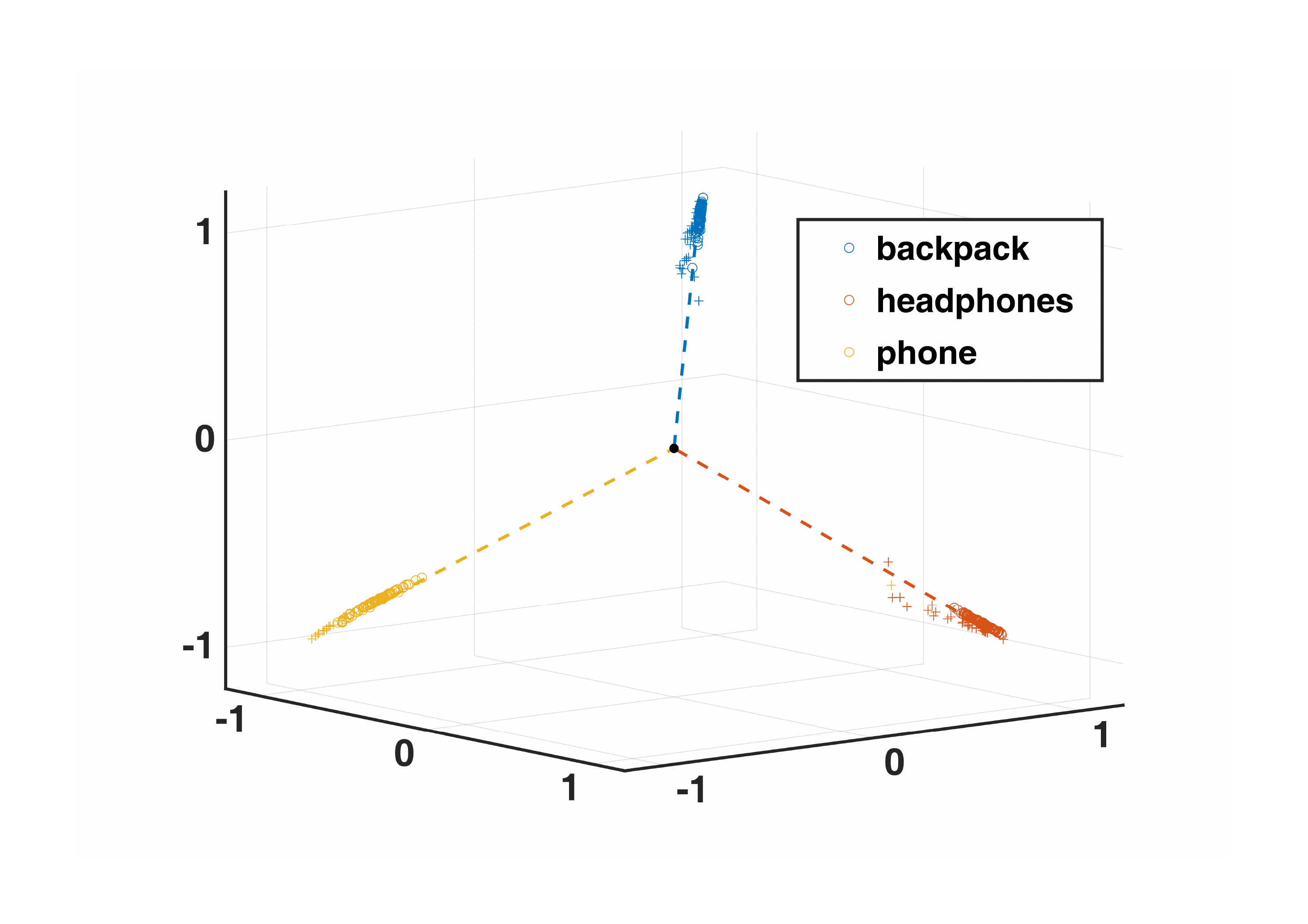}}
    \hfill
    \subfigure[ImageCLEF I$\rightarrow$P \label{fig:hyper_clef}]{\includegraphics[width=0.245\linewidth,height=77pt,trim=90 60 50 70,clip]{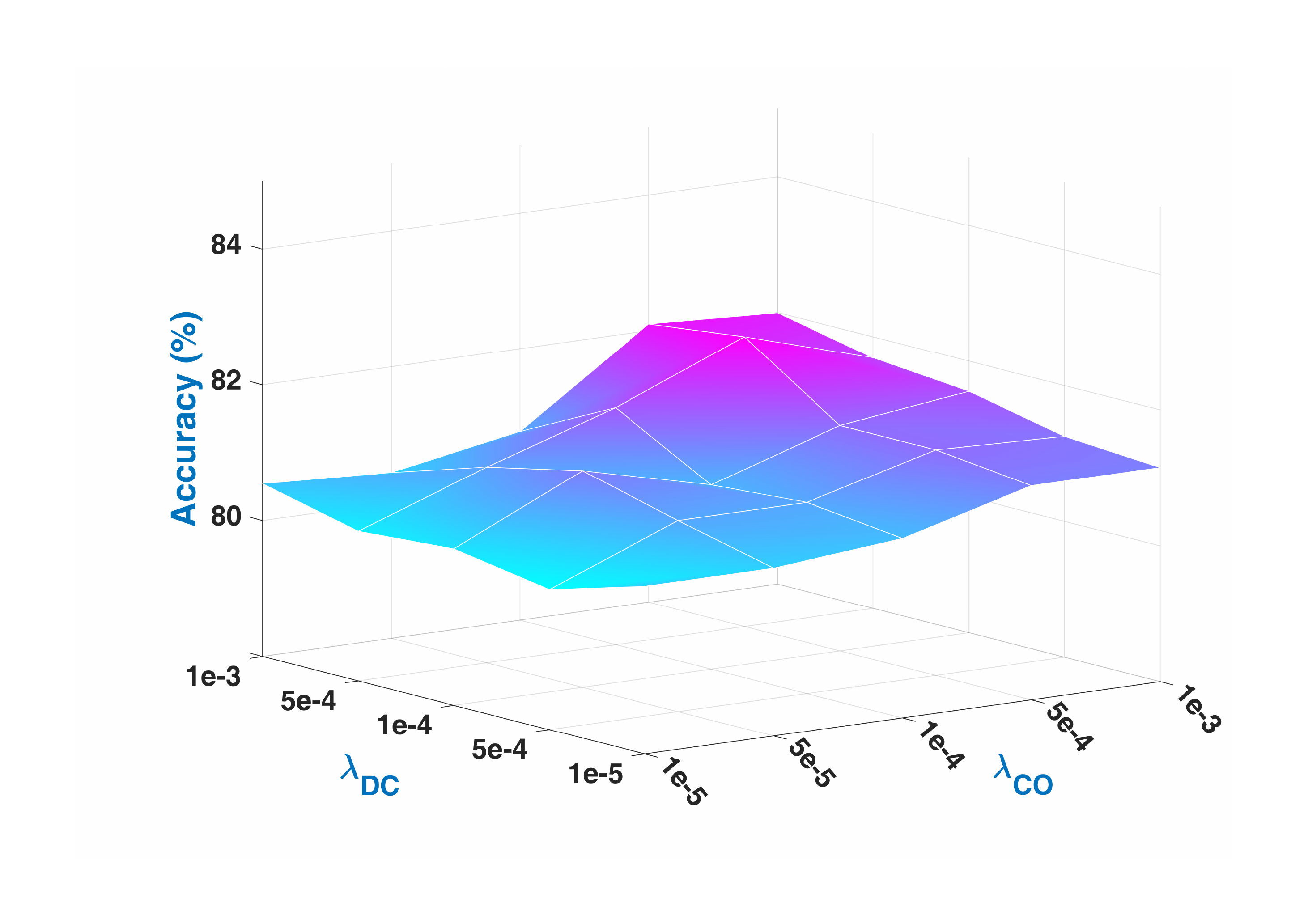}}
    \hfill
    \subfigure[Office31 A$\rightarrow$W \label{fig:hyper_31}]{\includegraphics[width=0.245\linewidth,height=77pt,trim=90 60 50 70,clip]{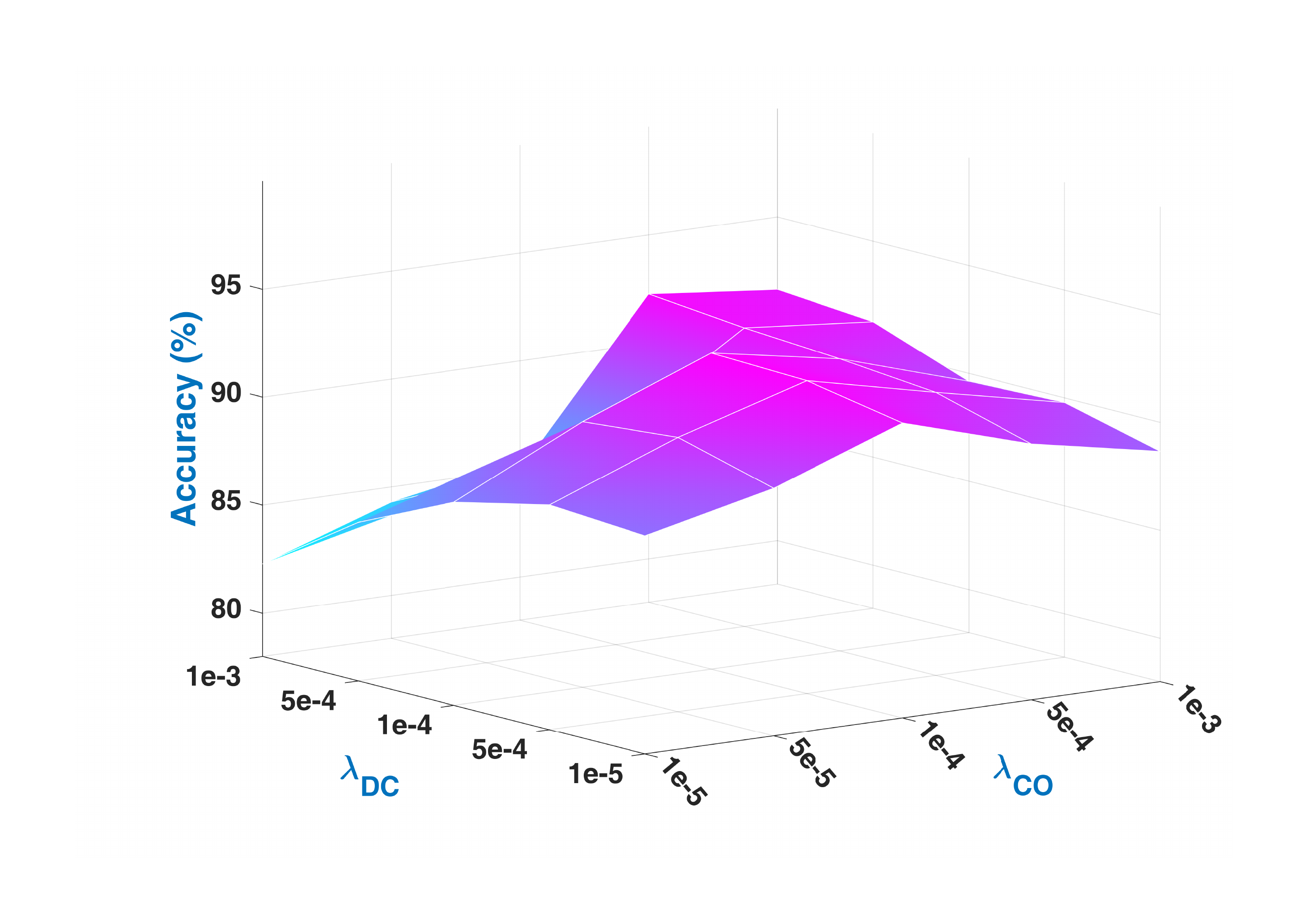}} \\
    \subfigure[ResNet (ImageCLEF) \label{fig:tsne_clef_bf_class}]{\includegraphics[width=0.245\linewidth,height=77pt,trim=190 115 180 100,clip]{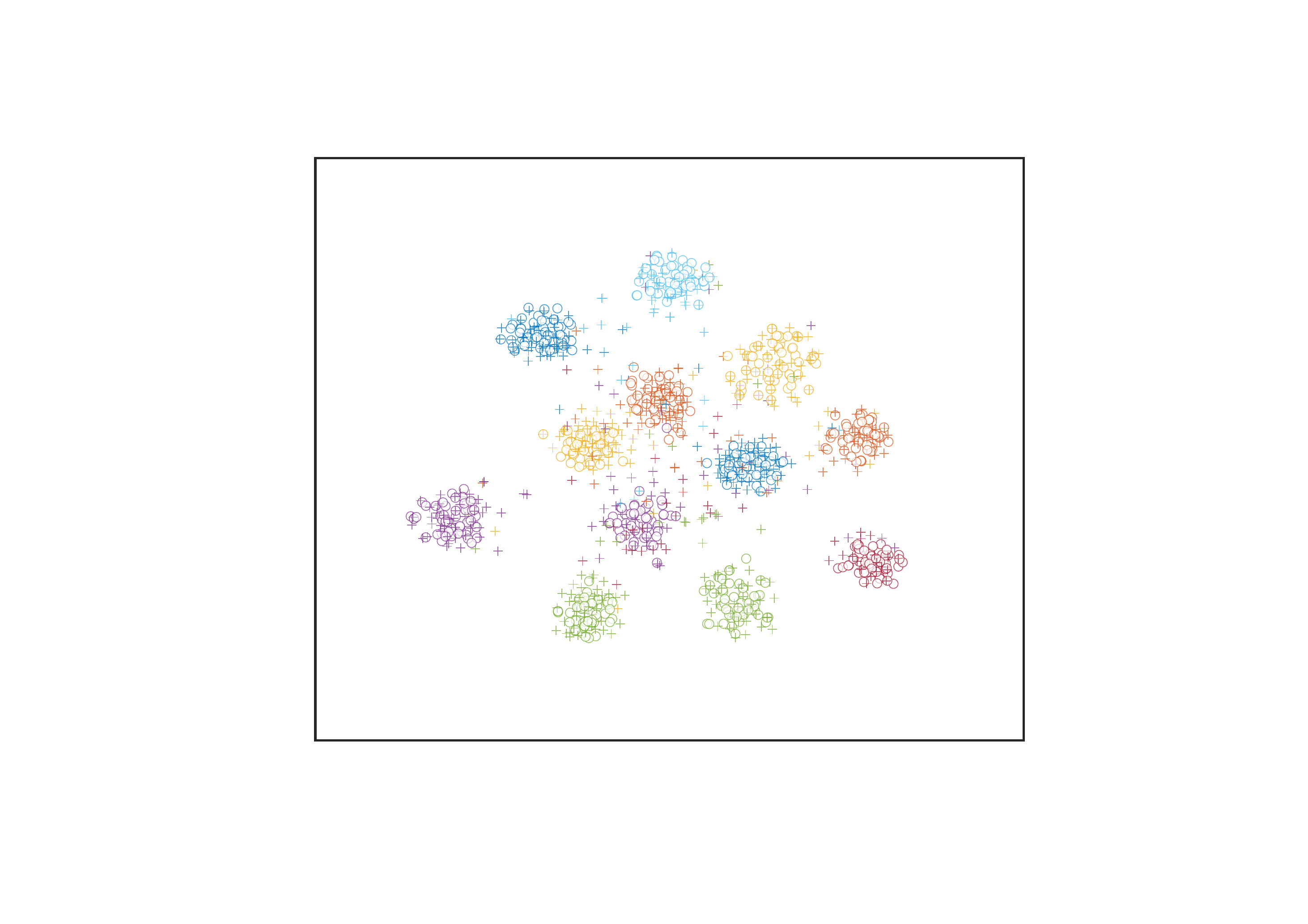}}
    \hfill
    \subfigure[GET (ImageCLEF) \label{fig:tsne_clef_ad_class}]{\includegraphics[width=0.245\linewidth,height=77pt,trim=190 115 180 100,clip]{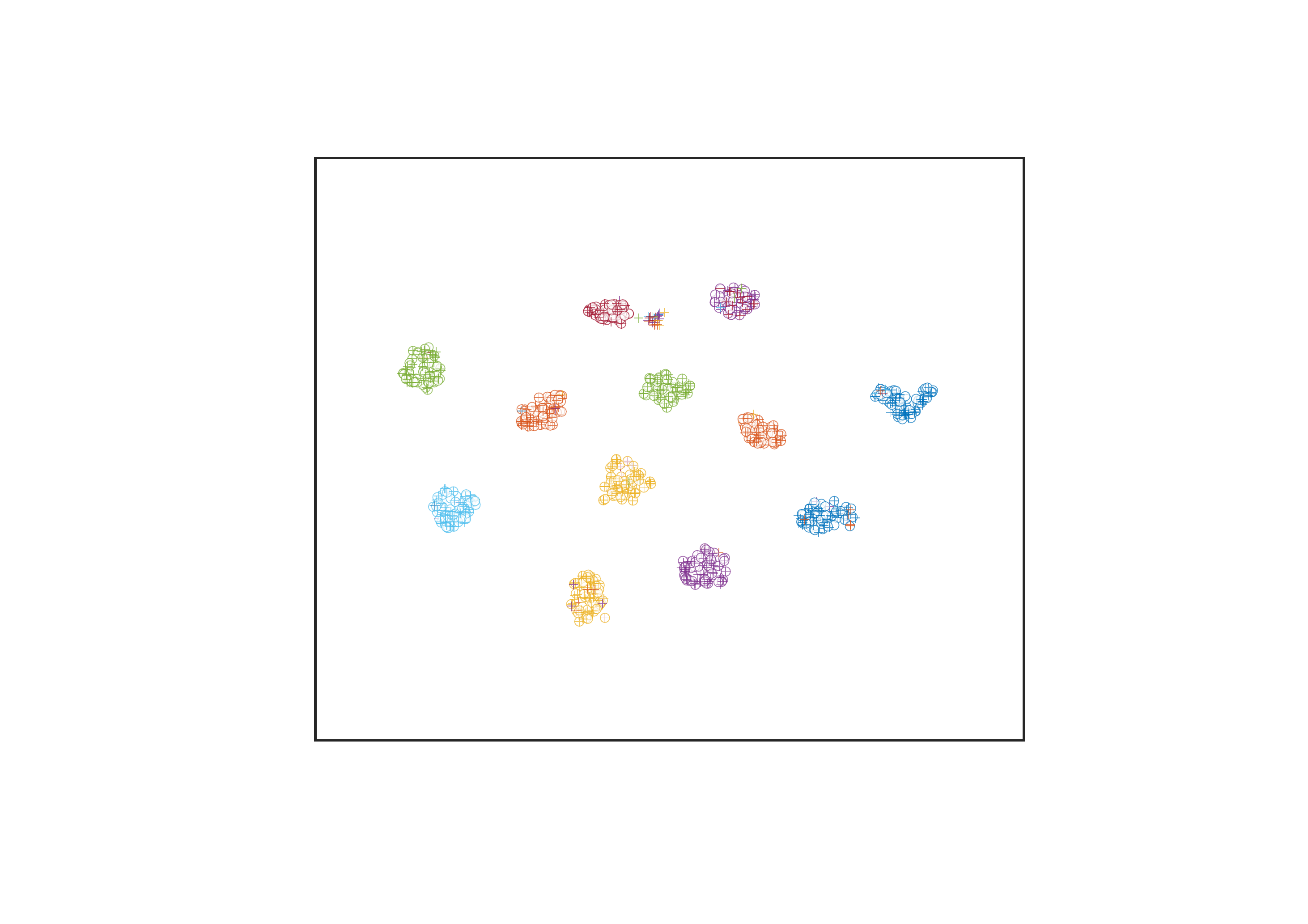}}
    \hfill
    \subfigure[ResNet (Office31) \label{fig:tsne_31_bf_class}]{\includegraphics[width=0.245\linewidth,height=77pt,trim=190 115 180 100,clip]{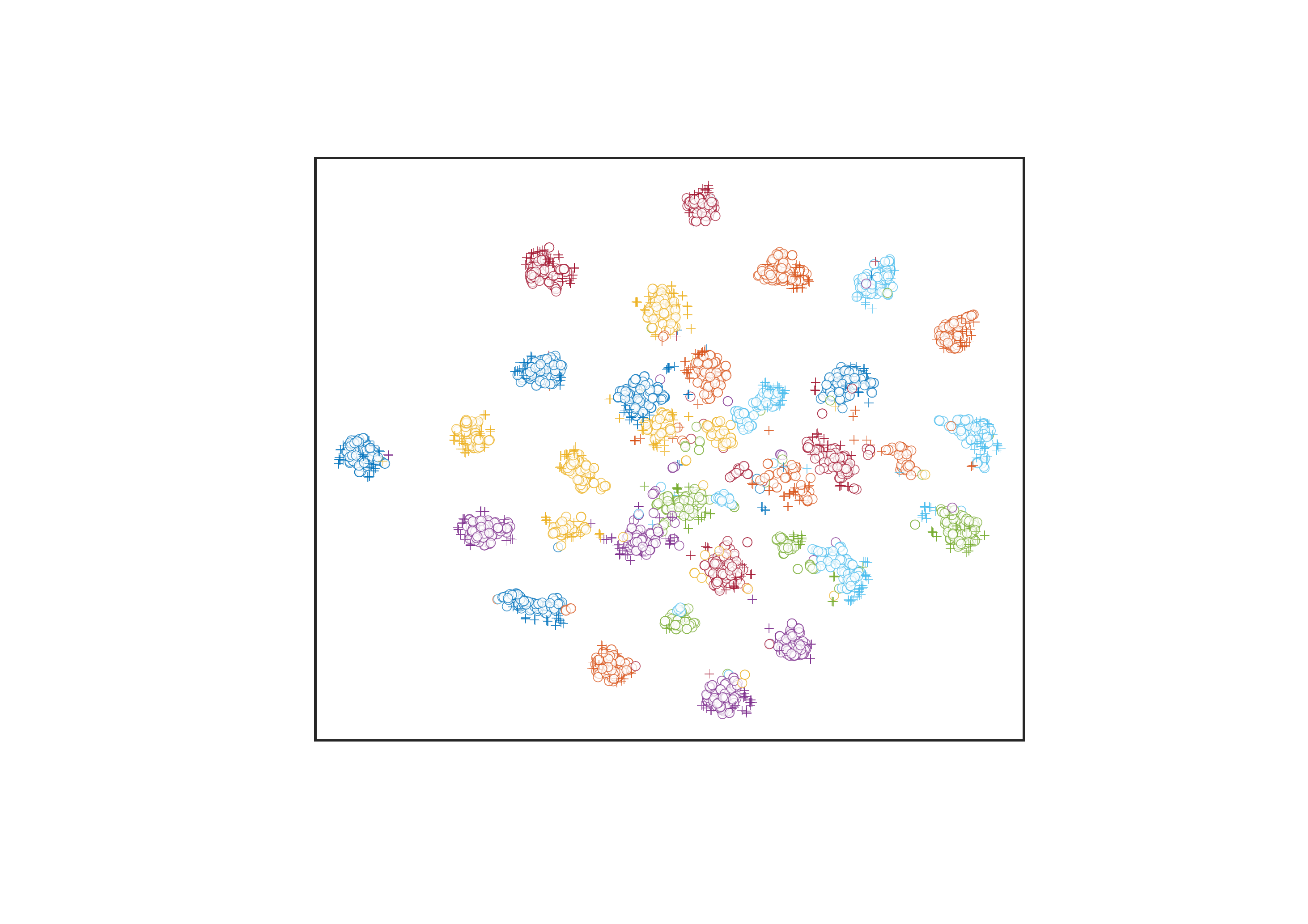}}
    \hfill
    \subfigure[GET (Office31) \label{fig:tsne_31_ad_class}]{\includegraphics[width=0.245\linewidth,height=77pt,trim=190 115 180 100,clip]{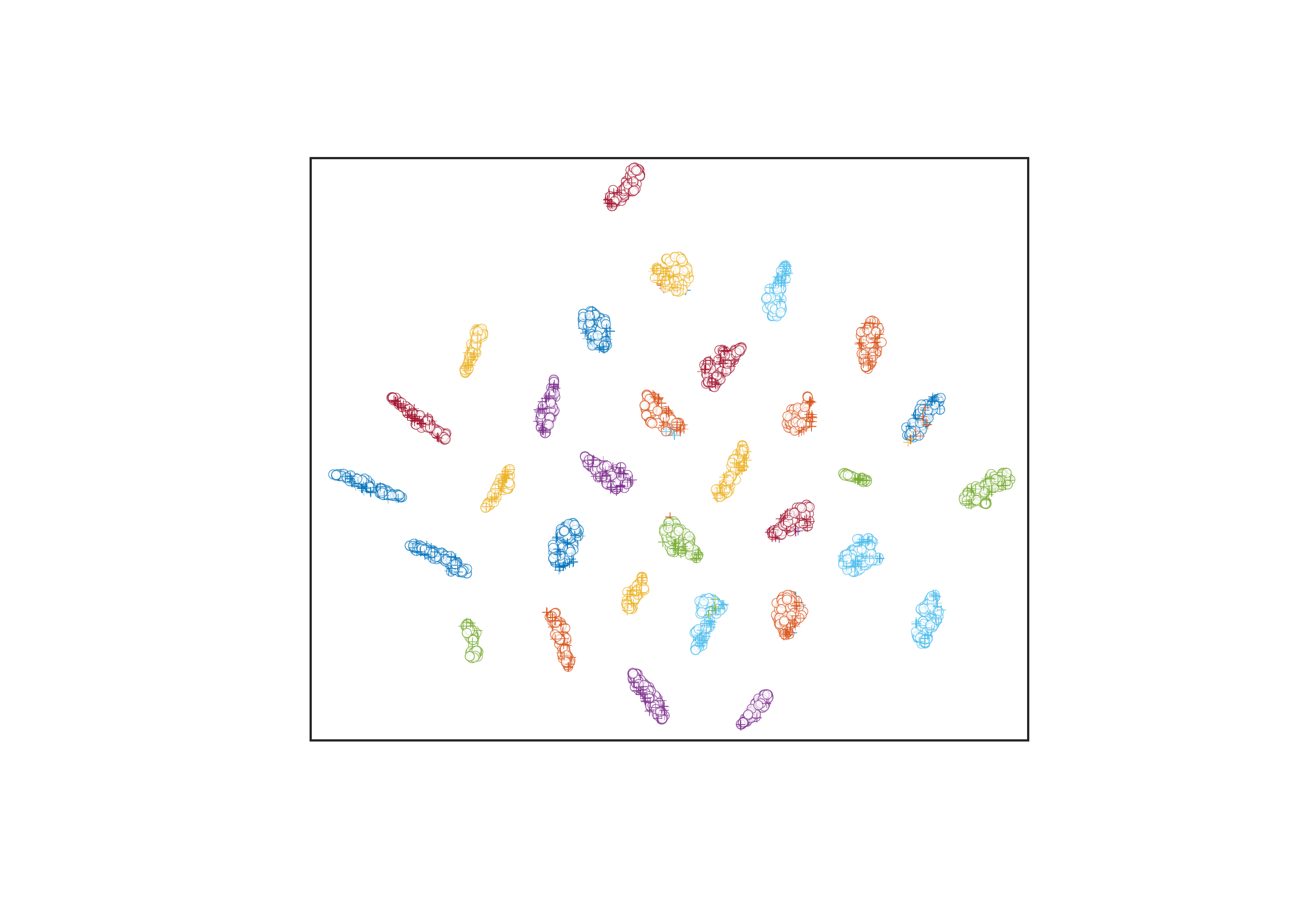}}
    \\
    \caption{\textbf{(a)}-\textbf{(b)}: Illustration of class orthogonality. \textbf{(c)}-\textbf{(d)}: The effect of hyper-parameters. \textbf{(e)}-\textbf{(h)}: t-SNE visualization of ResNet-50 and GET representations on ImageCLEF I$\rightarrow$P and Office31 A$\rightarrow$W, where `$\circ$'$\colon$source domain, `+'$\colon$target domain.}
    \label{fig:tSNE&Hyper&Ablation}
    \vskip -0.2in
\end{figure*}


We compare GET with several SOTA UDA methods: \textbf{DAN}~\cite{long2015learning}, \textbf{DANN}~\cite{ganin2015unsupervised}, \textbf{SAFN}~\cite{xu2019larger}, \textbf{ETD}~\cite{li2020enhanced}, \textbf{ALDA}~\cite{chen2020adversarial}, \textbf{DMP}~\cite{luo2020unsupervised}, \textbf{BNM}~\cite{cui2020towards}, \textbf{DWL}~\cite{xiao2021dynamic}.

\noindent \textbf{Office-Home}. Table \ref{tab:compare_home} shows the results of UDA task on Office-Home dataset. Compared with ImageCLEF and Office31, the adaptation tasks on Office-Home are more difficult because it has more categories and larger domain divergence. The average classification accuracy of GET is 68.6\%, which is higher than other SOTA UDA methods. It demonstrates that GET can learn the desired discriminability/transferability and improve accuracies on difficult tasks.

\noindent \textbf{ImageCLEF}. Table \ref{tab:compare_clef&31} (top) shows the results on ImageCLEF. Since GET learns both domain coherence and class orthogonality, there is a significant improvement over other methods which mainly focus on the domain alignment, \textit{e.g.}, DAN, DANN, ETD. Similar to GET, DWL considers the interaction between domain alignment and class discrimination, and achieves slightly better results on several tasks. But, GET learns more preferable geometry structure and achieves much higher accuracy on the most difficult C$\rightarrow$P task. These results show the importance of learning orthogonality in class subspaces. The average accuracy of GET is 91.3\%, which outperforms the other methods.

\noindent \textbf{Office31}. Table \ref{tab:compare_clef&31} (bottom) shows results on Office31. We observe that GET outperforms other comparison methods and achieves an accuracy of 89.5\% on the average. Specifically, there is a significant improvement from 73.4\% to 78.0\% on D$\rightarrow$A and from 71.6\% to 73.0\% on W$\rightarrow$A. Note that GET outperforms other two norm-based methods significantly, \textit{i.e.}, SAFN and BMM. It validates that GET can learn the stronger transferability and discriminability.

\subsection{Model Analysis}

\noindent \textbf{Toy Example}. To visualize the domain coherence and class orthogonality, we randomly select 3 classes from ImageCLEF I$\rightarrow$P and Office31 A$\rightarrow$W, and optimize GET on 3-dimensional space. As shown in Figure \ref{fig:3D_clefIP}-\ref{fig:3D_31AW}, the samples from the same class are gathered together on one axis (1-dimensional subspaces) while the clusters (axes) are nearly orthogonal. Besides, the source and target samples are well aligned and the class-level domain coherence is achieved. These results demonstrate that GET indeed learns a favorable geometry structure.

\noindent \textbf{Hyper-parameters}. We evaluate the hyper-parameters $\lambda_{DC}$ and $\lambda_{CO}$ on Office31 A$\rightarrow$W. The results are shown in Figure \ref{fig:hyper_clef}-\ref{fig:hyper_31}. It can be seen that the accuracies are sensitive to the hyper-parameters and decrease rapidly in the regions that $\lambda_{DC} > \lambda_{CO}$. In contrast, the results are robust when $\lambda_{DC}\leq \lambda_{CO}$; the highest accuracies are usually achieved when the parameters are balanced, \textit{i.e.}, $\lambda_{DC} = \lambda_{CO}$. These validate our theoretical results in Theorem \ref{thm:geometry_aware} which suggests that $\lambda_{DC}$ should be smaller to achieve better geometry structure. Besides, it demonstrates that the transferability and discriminability can be achieved simultaneously when $\lambda_{DC}$ and $\lambda_{CO}$ are properly chosen.


\noindent \textbf{Feature Visualization}. To evaluate the UDA performance at class-level qualitatively, we use the t-SNE~\cite{van2008visualizing} to visualize the representations extracted by ResNet-50 and GET on ImageCLEF I$\rightarrow$P and Office31 A$\rightarrow$W in Figure \ref{fig:tsne_clef_bf_class}-\ref{fig:tsne_31_ad_class}. As shown in \ref{fig:tsne_clef_bf_class}, there are several overlapping regions between the target ('+') and the source ('$\circ$') samples, but it is still visible that there exists a dataset shift and the misclassification may arise. For GET, the domains are well aligned and the clusters are clearly separated in \ref{fig:tsne_clef_ad_class}. For Office31 data in \ref{fig:tsne_31_bf_class}, there are many misaligned samples at class-level. In \ref{fig:tsne_31_ad_class}, our model alleviates this problem by grouping data from the same class as close as possible and separating data from different classes as far as possible. Besides, the results also demonstrate that GET learns the compact subspaces for different clusters.


\begin{table}
\begin{center}
\caption{Results of ablation study.}
\label{tab:ablation}
\renewcommand{\tabcolsep}{0.25pc} 
\vskip 0.05in
\begin{small}
\begin{tabular}{ccc|cccc}
\toprule
\multicolumn{3}{c|}{Constraint} & \multirow{2}{*}{ImageCLEF} & \multirow{2}{*}{Office31} & \multirow{2}{*}{Office-Home} &  \multirow{2}{*}{Avg.} \\
 $\mathcal{L}_{DC}$             & $\mathcal{L}_{CO}$ & $\mathcal{L}^t_{E}$          &                               &                           &                              &  \\
\midrule

 $\times$              & \checkmark  & \checkmark           &          90.6                     &        88.0                   &                 65.7             & 81.4  \\
 \checkmark            & $\times$    & \checkmark          &          87.7                     &           86.4                &                64.2              & 79.4 \\
 \checkmark  & \checkmark     & $\times$  &     91.2             &       88.7                    &             68.1                 & 82.7   \\
\checkmark &\checkmark              & \checkmark             &         \textbf{91.3}                      &       \textbf{89.5}                   &             \textbf{68.6}         & \textbf{83.1}  \\
\bottomrule
\end{tabular}
\end{small}
\end{center}
\vskip -0.2in
\end{table}

\noindent \textbf{Ablation Study}. We conduct ablation experiment to analyse the proposed constraints, \textit{i.e.}, domain coherence $\mathcal{L}_{DC}$ and class orthogonality $\mathcal{L}_{CO}$. The results are reported in Table \ref{tab:ablation}. The first two rows show that both the constraints improve the performance significantly on all datasets. Specifically, class orthogonality $\mathcal{L}_{CO}$ is more important when the domain divergence is small, \textit{e.g.}, ImageCLEF. However, when the domain divergence becomes larger, the class-level domain coherence $\mathcal{L}_{DC}$ is necessary to achieve the SOTA performance. The third row shows that the entropy objective $\mathcal{L}_{E}^t$ can be helpful when there are more classes and larger prediction uncertainty, \textit{e.g.}, Office31 and Office-Home. Besides, the model without $\mathcal{L}_{E}^t$ still achieves SOTA performance, which demonstrates that the contributions of the proposed constraints are significant. In the last row, GET is best on all datasets, which demonstrates that $\mathcal{L}_{DC}$ and $\mathcal{L}_{CO}$ are consistent and can benefit from each other.

\section{Conclusion}
In this paper, we propose a geometry-aware model called GET to enhance the transferability and discriminability simultaneously. This model starts from the rank property of matrix and then acquires the ability to learn coherent subspaces for domains and orthogonal subspaces for clusters via nuclear norm optimization. Based on GET, the derived theoretical results provide an insight into the norm-based learning for UDA. Specifically, we not only connect the transferability/discriminability enhancement and norm-based learning in UDA, but also show the possibility and condition to achieve these two abilities simultaneously. Extensive experiments validate the superiority of GET for UDA problem.


\bibliography{re}
\bibliographystyle{icml2022}


\end{document}